\def\eqref#1{equation~\ref{#1}}
\def\1{\bm{1}}
\DeclareMathAlphabet{\mathsfit}{\encodingdefault}{\sfdefault}{m}{sl}
\SetMathAlphabet{\mathsfit}{bold}{\encodingdefault}{\sfdefault}{bx}{n}
\tikzstyle{agent} = [rectangle, rounded corners, minimum width=1.4cm, minimum height=1cm, text centered, draw=green!30, fill=green!30, node distance=1.8cm and .5 cm]
\tikzstyle{clustering} = [rectangle, rounded corners, minimum width=2cm, minimum height=.5cm, text centered, draw=cyan!30, fill=cyan!30, node distance=2.cm and 0.cm]
\tikzstyle{io} = [draw, circle, fill=blue!30, node distance=2cm, minimum height=3.5em, draw=blue!30]
\tikzstyle{arrow} = [thick,->]
\tikzstyle{edge} = [thick,-]
\tikzstyle{line} = [thick,>=stealth, draw, -latex']
\newtheorem*{corollary*}{Corollary}
\newtheorem{theorem}{Theorem}[section] 
\newtheorem*{theorem*}{Theorem}
\newtheorem{lemma}{Lemma}[section] 
\newtheorem*{lemma*}{Lemma}
\newtheorem*{proposition*}{Proposition}
\title{Tree Search-Based Policy Optimization under Stochastic Execution Delay}
\author{David Valensi\\
Technion \\
\texttt{davidvalensi@campus.technion.ac.il} \\
\And
Esther Derman \\
Technion \\
\texttt{estherderman@campus.technion.ac.il} \\
\And
Shie Mannor \\
Technion \& Nvidia Research \\
\texttt{shie@ee.technion.ac.il} \\
\And
Gal Dalal \\
Nvidia Research\\
\texttt{gdalal@nvidia.com} \\
}
\newcommand\review[1]{\textcolor{black}{#1 }}
\DeclareMathOperator{\St}{\mathcal{S}}
\DeclareMathOperator{\A}{\mathcal{A}}
\begin{document}

\maketitle

\begin{abstract}
The standard formulation of Markov decision processes (MDPs) assumes that the agent's decisions are executed immediately.
However, in numerous realistic applications such as robotics or healthcare, actions are performed with a delay whose value can even be stochastic. In this work, we introduce stochastic delayed execution MDPs, a new formalism addressing random delays without resorting to state augmentation. We show that given observed delay values, it is sufficient to perform a policy search in the class of Markov policies in order to reach optimal performance, thus extending the deterministic fixed delay case. Armed with this insight, we devise DEZ, a model-based algorithm that optimizes over the class of Markov policies. DEZ leverages Monte-Carlo tree search similar to its non-delayed variant EfficientZero to accurately infer future states from the action queue. Thus, it handles delayed execution while preserving the sample efficiency of EfficientZero. Through a series of experiments on the Atari suite, we demonstrate that although the previous baseline outperforms the naive method in scenarios with constant delay, it underperforms in the face of stochastic delays. In contrast, our approach significantly outperforms the baselines, for both constant and stochastic delays. The code is available at \url{https://github.com/davidva1/Delayed-EZ}.
\end{abstract}

\section{Introduction}
The conventional Markov decision process (MDP) framework commonly assumes that all of the information necessary for the next decision step is available in real time: the agent's current state is immediately observed, its chosen action instantly actuated, and the corresponding reward feedback concurrently perceived \citep{puterman2014markov}. However, these input signals are often delayed in real-world applications such as robotics \citep{mahmood2018setting}, healthcare \citep{politi2022delays}, or autonomous systems, where they can manifest in different ways. \review{Perhaps the most prominent example where delay can be stochastic is in systems that rely on data transmission. Oftentimes, there is some interference in transmission that stems from internal or external sources. Internal sources may be due to temperature or pressure conditions that influence the sensing hardware, whereas external sources of interference occur when a policy infers actions remotely (e.g., from a cloud).} For example, an autonomous vehicle may experience delay from its perception module in recognizing the environment around it. This initial recognition delay is known as \emph{observation delay}. Additional delays can occur when taking action according to a previously made decision. This delay in response is termed \emph{execution delay}. As highlighted in \citep{katsikopoulos2003markov} and despite their distinct manifestations, both types of delay are functionally equivalent and can be addressed using similar methodologies.

In addition to its existence, the delay's nature is often overlooked its nature. In complex systems, delays exhibit stochastic properties that add layers of complexity to the decision-making process \citep{dulac2019challenges}. This not only mirrors their unpredictability, but also warrants a novel approach to reinforcement learning (RL) that does not rely on state augmentation. This popular method consists of augmenting the last observed state with the sequence of actions that have been selected by the agent, but whose result has not yet been observed \citep{bertsekas2012dynamic, altman1992closed, katsikopoulos2003markov}. Although it presents the advantage of recovering partial observability, this approach has two major limitations: (i) its computational complexity inevitably grows exponentially with the delay value \citep{Derman2021}; (ii) it structurally depends on that value, which prevents it from generalizing to random delays. In fact, the augmentation method introduced in \citep{bouteiller2020reinforcement} to address random delays was practically tested on small delay values and/or small support. 



In this work, we tackle the following question: How does one effectively engage in an environment where action repercussions are subject to random delays? We introduce the paradigm of stochastic execution delay MDPs (SED-MDPs). We then establish a significant finding: To address stochastic delays in RL, it is sufficient to optimize within the set of Markov policies, which is exponentially smaller than that of history-dependent policies. Our result improves upon the one in \citep{Derman2021} that tackled the narrower scope of deterministic delays.

Based on the observation above, we devise \emph{Delayed EfficientZero} (DEZ). This model-based algorithm builds on the strengths of its predecessor, EfficientZero, by using Monte Carlo tree search to predict future actions \citep{Ye2021}. In practice, DEZ keeps track of past actions and their delays using two separate queues. It utilizes these queues to infer future states and make decisions accordingly. We also improve the way the algorithm stores and uses data from previous experience, enhancing its overall accuracy. In essence, DEZ offers a streamlined approach to managing stochastic delays in decision-making. We accordingly modify the Atari suite to incorporate both deterministic and stochastic delays, where the delay value follows a random walk. In both cases, our algorithm surpasses `oblivious' EfficientZero that does not explicitly account for delay, and `Delayed-Q' from \cite{Derman2021}. 

In summary, our contributions are as follows.
\begin{enumerate}
    \item We formulate the framework of MDPs with stochastic delay and address the principled approach of state augmentation. 
    \item We prove that if the realizations of the delay process are observed by the agent, then it suffices to restrict policy search to the set of Markov policies to attain optimal performance. 
     \item We devise DEZ, a model-based algorithm that builds upon the prominent \mbox{EfficientZero}. DEZ yields non-stationary Markov policies, as expected by our theoretical findings. It is agnostic to the delay distribution, so no assumption is made about the delay values themselves. Our approach is adaptable and can be seamlessly integrated with any of the alternative model-based algorithms.
    \item  We thoroughly test DEZ on the Atari suite under both deterministic and stochastic delay schemes. In both cases, our method achieves significantly higher reward than the original EfficientZero and `Delayed-Q' from \cite{Derman2021}.
  \end{enumerate}

\section{Related work}
Although it is critical for efficient policy implementation, the notion of delayed execution remains scarce in the RL literature. One way to address this is through state-space augmentation, which consists of concatenating all pending actions to the original state. This brute-force method presents the advantage of recovering the Markov property, but its computational cost increases exponentially with the delay value \citep{walsh2009learning, Derman2021}.

Previous work that addressed random delay using state embedding includes \cite{katsikopoulos2003markov, bouteiller2020reinforcement}. The work \cite{katsikopoulos2003markov} simply augments the MDP with the maximal delay value to recover all missing information. In \citep{bouteiller2020reinforcement}, such an augmented MDP is approximated by a neural network, and trajectories are re-sampled to compensate for the actual realizations of the delay process during training. The proposed method is tested on a maximal delay of 6 with a high likelihood near 2. This questions the viability of their approach to (i) higher delay values; and (ii) delay distributions that are more evenly distributed over the support. Differently, by assuming the delay process to be observable, our method augments the state space by one dimension only, regardless of the delay value. It also shows efficiency for delay values of up to 25, which we believe comes from the agnosticism of our network structure to the delay. 


To avoid augmentation, \cite{walsh2009learning} alternatively proposes inferring the most likely present state and deterministically transitioning to it. Their model-based simulation (MBS) algorithm requires the original transition kernel to be almost deterministic for tractability. Additionally, MBS proceeds offline and requires a finite or discretized state space, which raises the curse of dimensionality. \citep{Derman2021} address the scalability issue through their Delayed DQN algorithm, which presents two main features: (i) In the same spirit as MBS, Delayed DQN learns a forward model to estimate the state induced by the current action queue; (ii) This estimate is stored as a next-state observation in the replay buffer, thus resulting in a time shift for the Bellman update. Although \citep{walsh2009learning, Derman2021} avoid augmentation, these two works focus on a fixed and constant delay value, whereas DEZ allows it to be random. 
In a related study, \cite{karamzade2024reinforcement} investigate a method similar to ours, focusing on continuous control tasks in the mujoco and DMC environments. Notably, they adopt our latent space imagination approach in their adaptation of Dreamer-V3 to manage delays.

DEZ leverages the promising results obtained from tree-search-based learning and planning algorithms \citep{schrittwieser2020mastering, Ye2021}. Based on a Monte Carlo tree search (MCTS), MuZero \citep{schrittwieser2020mastering} and EfficientZero \citep{Ye2021} perform multi-step lookahead and in-depth exploration of pertinent states. This is performed alongside latent space representations to effectively reduce the state dimensions. MuZero employs MCTS as a policy improvement operator by simulating multiple hypothetical trajectories within the latent space of a world model. On the other hand, EfficientZero builds upon MuZero's foundation by introducing several enhancements. These enhancements include a self-supervised consistency loss, the ability to predict returns over short horizons in a single step, and the capacity to rectify off-policy trajectories using its world model. Yet, none of these methods account for the delayed execution of prescribed decisions, whereas DEZ takes advantage of the world model to infer future states and accordingly adapt decision-making.





\section{Preliminaries}

A discounted infinite-horizon MDP is a tuple $(\St, \A, P, r, \mu,  \gamma)$ where $\St, \A$ are finite state and action spaces, respectively, $P$ is a transition kernel $P:\St\times \A\rightarrow\Delta_{\St}$ that maps each state-action pair to a distribution over the state space, $r:S\times A\rightarrow\mathbb{R}$ is a reward function, $\mu\in\Delta_{\St}$ an initial state distribution, and $\gamma\in[0,1)$ a discount factor that diminishes the weight of long-term rewards. 

At each decision step $t$, an agent observes a state $\tilde{s}_t$, draws an action $\tilde{a}_t$ that generates a reward $r(\tilde{s}_t,\tilde{a}_t)$,  then progresses to $\tilde{s}_{t+1}$ according to $P(\cdot|\tilde{s}_t, \tilde{a}_t)$. The action $\tilde{a}_t$ follows some prescribed decision rule $\pi_t$, which itself has different possible properties. More precisely, a decision rule can be Markovian (\textsc{M}) or history-dependent (\textsc{H}) depending on whether it takes the current state or the entire history as input. It can be randomized (\textsc{R}) or deterministic (\textsc{D}) depending on whether its output distribution supports multiple or single action. This yields four possible classes of decision rules: \textsc{HR}, \textsc{MR}, \textsc{HD}, \textsc{MD}. A policy $\pi:= (\pi_t)_{t\in\mathbb{N}}$ is a sequence of decision rules. The different classes of policies are denoted by $\Pi^{\textsc{HR}}, \Pi^{\textsc{MR}}, \Pi^{\textsc{HD}}, \Pi^{\textsc{MD}}$ and determined by their decision rules. When $\pi_t =\pi_0, \forall t\in\mathbb{N}$, the resulting policy is said to be stationary. We simply denote by $\Pi$ the set of stationary policies.


The value function under a given policy $\pi\in\Pi^{\textsc{HR}}$ is defined as 
$$v^\pi(s):=\mathbb{E}^\pi \left[\sum_{t=0}^{\infty} \gamma^t r(\Tilde{s}_t,\Tilde{a}_t) | \Tilde{s}_0=s \right],$$ where the expectation is taken with respect to the process distribution induced by the policy. Ultimately, our objective is to find $\pi^*\in\arg\max_{\pi\in\Pi^{\textsc{HR}}}v^\pi(s) =: v^*(s), \quad\forall s\in\St$. It is well-known that  
an optimal stationary deterministic policy exists in this standard MDP setting \citep{puterman2014markov}.

\section{Stochastic Execution-Delay MDP}
In this section, we introduce stochastic execution-delay MDPs (SED-MDPs), which formalize stochastic delays and MDP dynamics as two distinct processes. We adopt the ED-MDP formulation of \citep{Derman2021} that sidesteps state augmentation and extend it to the random delay case. With this perspective, we proceed in steps: (1) we define the SED-MDP framework with a suitable probability space; (2) we introduce the concept of \emph{effective decision time}; (3) we prove the sufficiency of Markov policies to achieve optimal return in a SED-MDP.


A SED-MDP is a tuple $(\St, \A, P, r, \mu, \gamma, \zeta, \bar{a})$ such that $(\St, \A, P, r,\mu, \gamma)$ is an infinite-horizon MDP and $\zeta\in\Delta_{[M]}$ a distribution over possible delay values, where $[M]:= \{0,1, \cdots, M\}$. The last element $\bar{a}\in\A^{M}$ models a default queue of actions to be used in case of null assignment.\footnote{One may use a sequence of $M$ distributions instead of deterministically prescribed actions, but we restrict to actions for notation brevity.} At each step $t$, a delay value $\tilde{z}_t$ is generated according to $\zeta$; the agent observes $\tilde{z}_t$ and a state $\tilde{s}_t$, prescribes an action $\tilde{a}_t$, receives a reward $r(\tilde{s}_t, \tilde{a}_t)$ and transits to $\tilde{s}_{t+1}$ according to $P(\cdot|\tilde{s}_t, \tilde{a}_t)$. Unlike the constant delay case, the action $\tilde{a}_t$ executed at time $t$ is sampled according to a policy that had been prescribed at a previous random time. This requires us to introduce the notion of effective decision time, which we shall describe in Section~\ref{sec: effective decision time}. 

For any $\pi\in\Pi^{\textsc{HR}}$, the underlying probability space is $\Omega = (\St\times [M]\times \A)^{\infty}$ which we assume to be equipped with a $\sigma$-algebra and a probability measure $\mathbb{P}^{\pi}$. Its elements are of the form
$\omega = (s_0, z_0, a_0, s_1, z_1, a_1,  \cdots).$ For all $t\in\mathbb{N}$, let the random variables $\tilde{s}_t:\Omega\to\St, \tilde{z}_t:\Omega\to [M]$ and $\tilde{a}_t:\Omega\to \A$ be respectively given by $\tilde{s}_t(\omega) = s_t$, $\tilde{z}_t(\omega) = z_t$ and $\tilde{a}_t(\omega) = a_t$. Clearly, $\tilde{a}:=(\tilde{a}_t)_{t\in\mathbb{N}}$ and $\tilde{z}:=(\tilde{z}_t)_{t\in\mathbb{N}}$ are dependent, as the distribution of $\tilde{a}_t$ depends on past realizations of $\tilde{z}_{:t}$ for all $t\in\mathbb{N}$. On the other hand, in this framework, the delay process $\tilde{z}$ is independent of the MDP dynamics while it is observed in real-time by the agent. The latter assumption is justified in applications such as communication networks, where delay can be effectively managed and controlled through adjustments in transmission power. We finally define the random variables of the history $(\tilde{h}_t)_{t\in\mathbb{N}}$ according to $\tilde{h}_0(\omega)= (s_0, z_0)$ and
$\tilde{h}_t(\omega) = (s_0,z_0,a_0,\cdots, s_t, z_t), \quad\forall t\geq 1.$


\subsection{Effective decision time}
\label{sec: effective decision time}

\begin{figure}[]
\centering
\includegraphics[scale=0.25]{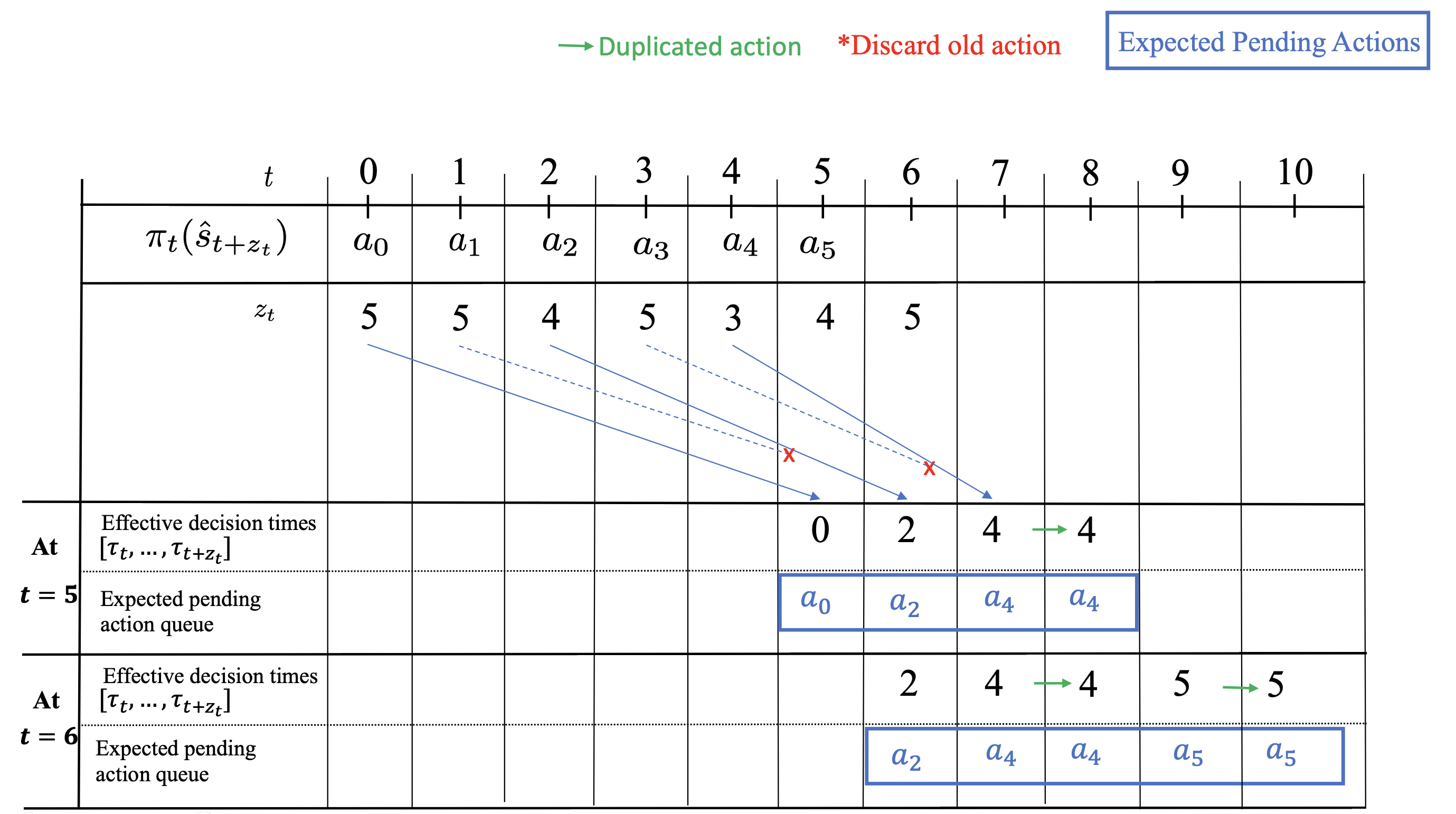}
\caption{Pending queue resolution in a SED-MDP. The policy input $\hat{s}_{t+z_t}$ corresponds to the state inferred at $t$ by a forward model (see Section~\ref{sec: sdez}). For clarity, effective decision times are shown for $t\in\{5,6\}$ only.}
\label{pending_action}
\end{figure}

To deal with random delays, we follow \citep{bouteiller2020reinforcement} and assume that the SED-MDP actuates the most recent action that is available at that step. This implies that previously decided actions can be overwritten or duplicated. Fig.~\ref{pending_action} shows the queue dynamics resulting from successive realizations of the delay process. Concretely, at each step $t$, the delayed environment stores two distinct queues: one queue of $M$ past actions $(a_{t-M}, a_{t-M+1}, \cdots, a_{t-1})$ and one queue of $M$ past execution delays $(z_{t-M}, z_{t-M+1}, \cdots, z_{t-1})$. Then, we define the \emph{effective decision time} at $t$ as: 
\begin{align}
 \tau_t:= \max_{t'\in[t-M: t]}\{ t' + z_{t'} \leq t\}.
\label{eq: effective decision time}
\end{align}
It is a stopping time in the sense that $\tau_t$ is $\sigma(\tilde{h}_t)$-measurable. Notably, it also determines the distribution that generated each executed action: given a policy $\pi = (\pi_t)_{t\geq 0}$, for any step $t$, the action at time $t$ is generated through the decision rule $\tilde{a}_t\sim \pi_{\tau_t}$. Intuitively, $\tau_t$ answers the question ``What is the most recent action available for execution, given past and current delays"? Alternatively, we may define $e_t:= t+z_t$ as the \emph{earliest execution time} resulting from a decision taken at time $t$.



%

\textbf{Example. } Let $M=5$ and assume that the following table gives the first realizations of decision rules and delays:
\begin{center}
\begin{tabular}{|c || c | c | c | c | c |} 
\hline
$\boldsymbol{t}$ & $0$ & $1$  & $2$ & $3$ & $4$ \\ 
 \hline
 $\boldsymbol{z_t}$ & $5$ & $4$ & $4$ & $4$ & $3$ \\  
 \hline
 $\boldsymbol{e_t}$ & $5$ & $5$ & $6$ & $7$ & $7$ \\ 
 \hline
 $\boldsymbol{a_t}$ & $a_0$ & $a_1$  & $a_2$ & $a_3$ & $a_4$ \\
\hline
\end{tabular}
\end{center}
As a result, the effective decision time at $t=5$ is given by $\tau_5 = 1\cdot \mathbbm{1}_{\{z_5>0\}} + 5\cdot \mathbbm{1}_{\{z_5=0\}}$.
From the agent's perspective, the pending queue at time $t=5$ depends on the delay value $z_5$: 
\begin{itemize}
  \item If $z_5 = 5$, then the pending queue is $[a_1, a_2, a_4, a_4, a_4]$.
  \item If $z_5 = 4$, then the pending queue is $[a_1, a_2, a_4,a_4]$.
  \item If $z_5 = 3$, then the pending queue is $[a_1, a_2, a_4]$.
  \item If $z_5 = 2$, then the pending queue is $[a_1, a_2]$.
  \item If $z_5 = 1$, then the pending queue is $[a_1]$.
  \item If $z_5 = 0$, then the pending queue is $[]$.
\end{itemize}


\newcommand{\Ppi}{\mathbb{P}^\pi}
\newcommand{\T}{\tilde{t}}

\subsection{SED-MDP process distribution}

We now study the process distribution generated from any policy, which will lead us to establish the sufficiency of Markov policies for optimality. To begin with, the earliest time at which \emph{some} action prescribed by the agent's policy is executed corresponds to $\min\{z_0, z_1+1,\dots, M-1+z_{M-1}\}$ or equivalently, to $\min\{e_0, e_1,\dots, e_{M-1}\}$. From there on, the SED-MDP performs a switch and uses the agent policy instead of the fixed default queue. Denoting $t_{z}:= \min\{z_0, z_1+1,\dots, M-1+z_{M-1}\}$, for any policy $\pi \in \Pi^\textsc{HR}$, we have:
\begin{align}
&\Ppi\left(\tilde{a}_t=a|\tilde{h}_t=(h_{t-1},a_{t-1},s_t,z_t)\right) = \begin{cases}
		\mathbbm{1}_{\Bar{a}_t}(a) & \text{if }  t < \min\{z_0, z_1+1,\dots, z_t+t\},\\
            \pi_{\tau_t} (h_{t})(a) & \text{otherwise.}
		 \end{cases}
   \label{eq: delayed policy with action queue}  &&
\end{align}



We can now formulate the probability of a sampled trajectory in the following theorem. A proof is given in Appendix~\ref{appendix: sample path proba}.
\begin{theorem}
\label{thm: sample path proba}
For any policy \(\pi := (\pi_t)_{t\in\mathbb{N}} \in \Pi^\textsc{HR}\), the probability of observing history \(h_t := (s_0,z_0,a_0,\cdots, a_{t-1},s_t, z_t)\) is given by:
\[\mathbb{P}^\pi(\tilde{s}_0=s_0,\tilde{z}_0=z_0,\tilde{a}_0=a_0,\cdots, \tilde{a}_{t-1}=a_{t-1},\tilde{s}_t=s_t, \tilde{z}_t=z_t)\] 
\[=  \mu(s_0) \left( \prod_{k=0}^{t_z-1} \mathbbm{1}_{\Bar{a}_k}(a_k) P(s_{k+1}|s_k,a_k) \right) 
\left( \prod_{l=t_z }^{t-1} \pi_{\tau_l}(h_l)(a_l) 
P(s_{l+1}|s_l,a_l)
\right)
\left( \prod_{m=0}^{t} \zeta(z_{m})
\right).
\]
\end{theorem}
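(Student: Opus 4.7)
The plan is to apply the chain rule of conditional probabilities to the joint law of $\tilde{h}_t$ and then evaluate each conditional factor using the three structural features built into the SED-MDP definition: (i) the delay process $(\tilde{z}_m)_{m\ge 0}$ is independent of the MDP dynamics and is i.i.d.\ with marginal $\zeta$; (ii) the state transition satisfies the Markov property, $\mathbb{P}^\pi(\tilde{s}_{k+1}=s_{k+1}\mid\tilde{h}_k = h_k,\tilde{a}_k=a_k) = P(s_{k+1}\mid s_k,a_k)$; and (iii) the action law conditional on the history is given by equation~\eqref{eq: delayed policy with action queue}.

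First, I would write the telescoping decomposition
\[
\mathbb{P}^\pi(\tilde{h}_t=h_t) = \mu(s_0)\,\zeta(z_0)\prod_{k=0}^{t-1}\mathbb{P}^\pi(\tilde{a}_k=a_k\mid\tilde{h}_k=h_k)\,\mathbb{P}^\pi(\tilde{s}_{k+1}=s_{k+1}\mid\tilde{h}_k=h_k,\tilde{a}_k=a_k)\,\mathbb{P}^\pi(\tilde{z}_{k+1}=z_{k+1}\mid\cdots),
\]
in which the initial factor uses independence of $\tilde{s}_0\sim\mu$ and $\tilde{z}_0\sim\zeta$. Each conditional on the right is then replaced: the transition factor by $P(s_{k+1}\mid s_k,a_k)$ via the Markov property; the delay factor by $\zeta(z_{k+1})$ via independence of $\tilde{z}$ from everything else; and the action factor by the two-case expression in \eqref{eq: delayed policy with action queue}.

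Next, I would regroup the resulting product. The $t+1$ delay terms collect into $\prod_{m=0}^{t}\zeta(z_m)$, which matches the last factor in the claim. The action/transition terms are then split at the index $t_z=\min\{z_0,z_1+1,\dots,z_{M-1}+M-1\}$: for $k<t_z$ the action factor is the indicator $\mathbbm{1}_{\bar{a}_k}(a_k)$, giving the first product, whereas for $k\ge t_z$ the action factor is $\pi_{\tau_k}(h_k)(a_k)$, giving the second product.

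The main obstacle is the careful treatment of the action conditional. Two points need to be justified: that $\tau_k$ (and therefore $t_z$) is $\sigma(\tilde{h}_k)$-measurable, so that $\pi_{\tau_k}(h_k)(a_k)$ is a well-defined function of the observed history; and that the threshold condition $k<\min\{z_0,z_1+1,\dots,z_k+k\}$ appearing in \eqref{eq: delayed policy with action queue} coincides on the event $\{\tilde{h}_k=h_k\}$ with $k<t_z$, since for $k<M$ the indices in the minimum that involve $z_j$ with $j>k$ are automatically dominated by $z_j+j\ge j>k$. The remaining steps are routine bookkeeping.
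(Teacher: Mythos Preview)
Your proposal is correct and follows essentially the same approach as the paper: both apply the chain rule to the joint law of $\tilde{h}_t$, evaluate each conditional factor using (i) independence of the delay process, (ii) the Markov transition property, and (iii) the action conditional from equation~\eqref{eq: delayed policy with action queue}, and then regroup by splitting the product at $t_z$. Your explicit verification that the time-$k$ threshold condition $k<\min\{z_0,\dots,z_k+k\}$ coincides with $k<t_z$ (because $z_j+j\ge j>k$ for $k<j\le M-1$) is a detail the paper leaves implicit, so your write-up is in fact slightly more careful on this point.
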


Extending the result of \citep{Derman2021}, we establish that the process distribution generated from any history-dependent policy can equivalently be generated from a Markov policy. The proof is given in Appendix~\ref{appendix: markov policy is sufficient}. 

\begin{theorem}
\label{theorem: markov policy is sufficient}
Let $\pi \in \Pi^{\textsc{HR}}$ be a history-dependent policy. For all $s_0\in\St$, there exists a Markov policy $\pi' \in \Pi^{\textsc{MR}}$ that yields the same process distribution as $\pi$, i.e., 
${
   \mathbb{P}^{\pi'}(\tilde{s}_{\tau_t}  = s', \tilde{a}_t  = a | \tilde{s}_0  = s_0, \tilde{z}=z) = 
\mathbb{P}^{\pi}(\tilde{s}_{\tau_t}  = s', \tilde{a}_t  = a | \tilde{s}_0  = s_0, \tilde{z}=z),
}$
for all $ a\in\A, s'\in \St,t\geq 0$\review{, and $\tilde{z}:=(\tilde{z}_t)_{t\in\mathbb{N}}$ the whole delay process.} 
\end{theorem}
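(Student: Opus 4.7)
The plan is to mimic the classical Puterman-type construction of a Markov policy used for the constant-delay case in \citep{Derman2021}, now carrying the stochastic delay process through as exogenous conditioning. The key enabling fact is that $\tilde{z}$ is independent of the MDP dynamics and observed in real time, so I may fix an arbitrary realization $z = (z_t)_{t \geq 0}$ and work conditionally on $\tilde{z} = z$ throughout; once $z$ is fixed, the stopping times $\tau_t$ and earliest execution times $e_t = t + z_t$ become deterministic, and the only remaining randomness lies in the action draws and the MDP transitions.

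First I construct the candidate Markov policy $\pi'$. For each $n \geq 0$ and each state $s \in \St$ in the support of $\tilde{s}_n$ under $\mathbb{P}^\pi(\cdot \mid \tilde{s}_0 = s_0, \tilde{z} = z)$, I define
\[
\pi'_n(s)(a) \;:=\; \mathbb{P}^{\pi}\!\left(\tilde{a}_t = a \,\Big|\, \tilde{s}_n = s,\, \tilde{s}_0 = s_0,\, \tilde{z} = z,\; n = \tau_t\right),
\]
where $t$ is any step at which the decision drawn at time $n$ is executed (the conditional is identical for any valid such $t$, since the underlying draw is the same). Off-support states are assigned arbitrary distributions. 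This generalizes the constant-delay construction by folding the full delay realization into the conditioning, which is legitimate because $\tilde{z}$ is exogenous.

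The core step is an induction on $t$ showing that for all $s' \in \St$ and $a \in \A$,
\[
\mathbb{P}^{\pi'}\!\left(\tilde{s}_{\tau_t} = s',\, \tilde{a}_t = a \,\Big|\, \tilde{s}_0 = s_0,\, \tilde{z} = z\right) \;=\; \mathbb{P}^{\pi}\!\left(\tilde{s}_{\tau_t} = s',\, \tilde{a}_t = a \,\Big|\, \tilde{s}_0 = s_0,\, \tilde{z} = z\right).
\]
The base case follows from the initial condition $\tilde{s}_0 = s_0$ and the definition of $\pi'_0$. For the inductive step, I apply the sample-path factorization of Theorem~\ref{thm: sample path proba} to expose the product structure of history probabilities, marginalize intermediate histories via the inductive hypothesis, and invoke the definition of $\pi'$ to collapse the history-dependent action-draw factors into state-only factors. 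Matching action distributions together with a common transition kernel $P$ yield matching state marginals, closing the induction.

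The main obstacle is the bookkeeping around the pending-action queue: because the same underlying decision may be executed at multiple future steps (queue duplication) while later draws may be overwritten (queue truncation), I must justify that, conditionally on $\tilde{z} = z$, the law of $\tilde{a}_t$ given the full history $\tilde{h}_t$ reduces to a law given only $\tilde{s}_{\tau_t}$. This reduction rests on the Markov property of the transition kernel combined with the fact that, once $z$ is fixed, $\tau_t$ is deterministic and the action at step $t$ is drawn a single time at step $\tau_t$ from a rule that, by the inductive construction, can be replaced by a Markov one. Specializing the resulting equality to each fixed $z$ and summing/integrating where needed delivers the desired statement.
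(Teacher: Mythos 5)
Your proposal follows essentially the same route as the paper: you define $\pi'$ at the effective decision times as the conditional law $\mathbb{P}^{\pi}(\tilde{a}_t = a \mid \tilde{s}_{\tau_t}=s,\ \tilde{s}_0=s_0,\ \tilde{z}=z)$ with the delay realization fixed as exogenous, and then run an induction on $t$ that uses the sample-path factorization of Theorem~\ref{thm: sample path proba} together with the conditional-independence reduction (the paper's Lemma on $z_t>0$, plus the $z_t=0$ case) to collapse history-dependent action factors into state-only ones. This matches the paper's construction in Eq.~(\ref{eq: policy_tag_markov}) and its inductive argument, so the approach is sound and not a genuinely different proof.
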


We can now devise policy training in the smaller class $\Pi^{\textsc{MR}}$ without impairing the agent's return. The algorithm we propose next does exactly this.

\section{Stochastic-Delay EfficientZero}
\label{sec: sdez}

We introduce a novel algorithm designed to address the challenges posed by stochastic execution delay. Drawing inspiration from the recent achievements of EfficientZero on the Atari 100k benchmark \citep{Ye2021}, we use its architectural framework to infer future states with high accuracy. Its notable sample efficiency enables us to quickly train the forward model. We note that recent advancements in model-based approaches extend beyond lookahead search methods such as MuZero and EfficientZero. Depending on the task at hand, alternatives such as IRIS \citep{micheli2022transformers}, SimPLE \citep{kaiser2019model}, or DreamerV3 \citep{hafner2023mastering} exhibit distinct characteristics that can also be considered.
Our approach is adaptable and can be seamlessly integrated with any of these algorithms.

\begin{wrapfigure}{r}{5.5cm}
\caption{Interaction diagram between DEZ and the delayed environment}\label{sdez-diagram}
\begin{tikzpicture}
\node[align=center] (z) [io] {$z_t$};
\node[align=center] (ac_queue) [clustering, right of = z] {Delayed\\ env};
\node[align=center] (a) [io, above of = ac_queue] {$a_t$};
\node[align=center] (policy) [agent, above of=a] {Tree-search\\Policy};
\node[align=center] (queue_out) [io, below of = ac_queue] {$\hat{a}_{t:t+z_t}$};
\node[align=center] (s) [io, right of = ac_queue] {$s_t$};
\node[align=center] (forward) [agent, right of=s] {Forward\\model};
\node[align=center] (sz) [io, above of=forward] {$\hat{s}_{t+z_t}$};

\draw [arrow] (z) -- (ac_queue);
\draw [arrow] (policy) -- (a);
\draw [arrow] (a) -- (ac_queue);
\draw [arrow] (ac_queue) -- (queue_out);
\draw [arrow] (ac_queue) -- (s);
\draw [arrow] (s) --  (forward);
\draw [arrow] (forward) --  (sz);
\path [line] (sz) |-  (policy);
\path [line] (queue_out) -| (forward);
\end{tikzpicture}
\end{wrapfigure}
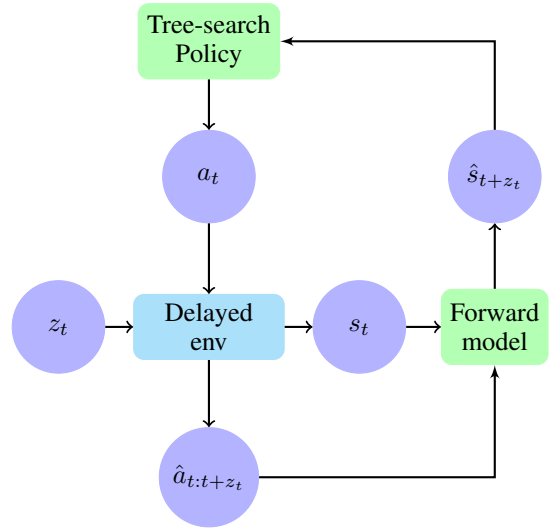 

\textbf{Algorithm description.}
DEZ is depicted in Fig.~\ref{sdez-diagram}. It adapts EfficientZero \citep{Ye2021} to act and learn in environments with stochastic delays. At any time $t$, we maintain two queues of length $M$. One is the action queue of previous realizations $M$ $[a_{t-M}, \dots, a_{t-1}]$. The second is the delay value queue observed for these $M$ actions, $[z_{t-M}, \dots, z_{t-1}]$.
During inference, we observe state $s_t$ and delay $z_{t}$, and aim to estimate the future state $\hat{s}_{t+z_t}$. To do so, we take the expected pending actions denoted by $[\hat{a}_{t}, \dots, \hat{a}_{t+z_t-1}]$ and successively apply the learned forward model: $\hat{s}_{t+1}=\mathcal{G}(s_t, \hat{a}_{t}), \dots, \hat{s}_{t+z_t}=\mathcal{G}(\hat{s}_{t+z_t-1}, \hat{a}_{t+z_t-1})$, using notations from \cite{Ye2021}. Finally, we perform an MCTS search to output $a_t := \pi_t(\hat{s}_{t+z_t})$ and add $a_t, z_t$ to the respective queues.
On the other hand, the action executed in the delayed environment is $a_{\tau_t}$. Since no future action can overwrite the first expected action $\hat{a}_{t}$, we note that $a_{\tau_t}=\hat{a}_{t}$. The reward of the system is then stored to form a transition $(s_t, a_{\tau_t}, r_t)$.

\textbf{Pending actions resolution. }
Fig.~\ref{pending_action} depicts the action and delay queues, together with the relation between effective decision times and pending actions. At each time step, the actual pending actions are calculated using the effective decision times from Eq.~(\ref{eq: effective decision time}). 

\textbf{Future state prediction. }
To make accurate state predictions, we embrace the state representation network $s_t=\mathcal{H}(o_t)$ and the dynamics network $\hat{s}_{t+1}=\mathcal{G}(s_t,a_t)$ from EfficientZero \citep{Ye2021}. For simplicity, we do not distinguish between observation $o_t$ and state representation $s_t$ as do \cite{Ye2021}. However, all parts of the network----forward, dynamics, value, and policy---operate in the latent space. Moreover, the replay buffer contains transition samples with states $s_t, s_{t+1}$ in the same space. \review{The pseudo-code in Algo.~\ref{alg:sample_episode} depicts how self-play samples episodes in the stochastic delayed environment. }

\textbf{Corrections in the replay buffer. }
In the original version of EfficientZero, episode transitions $(s_t, a_t, r_t)$ are stored in the game histories that contain state, action, and reward per step.   
However, to insert the right transitions into the replay buffer, we post-process the game histories using all execution delays $\{z_t\}_{t=0}^{t=L}$ of the episode of length $L$, enabling us to compute effective decision times and store $(s_t, a_{\tau_t}, r_t)$ instead. \review{More details about the episode processing can be found in Appx.~\ref{appendix: algorithm details}.}

\textbf{Learning a policy in a delayed environment. }
Action selection is carried out using the same MCTS approach as in EfficientZero. The output of this search denoted by $\pi_t(\hat{s}_{t+z_t})$ also serves as the policy target in the loss function. 
The primary alteration occurs in the policy loss, which takes the form:
\begin{align*}
    \mathcal{L}_p = \mathcal{L}(\pi_t(\hat{s}_{t+z_t}), p_t(s_{t+z_t})). 
\end{align*}
where $p_t$ is the policy network at time $t$. Although the MCTS input relies on a predicted version of $s_{t+z_t}$, this loss remains justified due to the exceptional precision exhibited by the dynamics model within the latent space. Other loss components remain unchanged.

\section{Experiments}
\label{sec:experiments}

The comprehensive framework that we construct around EfficientZero accommodates both constant and stochastic execution delays. 
We consider the delay values $\{5, 15, 25\}$ to be either the constant value of delay or its maximal value when dealing with stochastic delays, as in \citep{Derman2021}.
For both constant and stochastic delays, we refrain from random initialization of the initial action queue as in \citep{Derman2021}. Instead, our model determines the first $M$ actions. This is achieved through the iterative application of the forward and the policy networks. In practice, the agent observes the initial state $s_0$, infers the policy through $\bar{a}_0$, and predicts the subsequent state $\hat{s}_1=\mathcal{G}(s_0, \bar{a}_0)$. It similarly infers $\hat{s}_2, \dots, \hat{s}_M$ and selects actions $\bar{a}_2, \dots, \bar{a}_{M-1}$ through this iterative process. 

EfficientZero sampled 100K transitions, aligning with the Atari 100K benchmark. Although our approach significantly benefits from EfficientZero's sample efficiency, the presence of delay adds complexity to the learning process, requiring mildly more interactions -- 130K in our case. Successfully tackling delays with such limited data is a non-trivial task.


\begin{figure}[h]
\begin{minipage}[b]{.5\textwidth}
\centering
\includegraphics[width=1\textwidth]{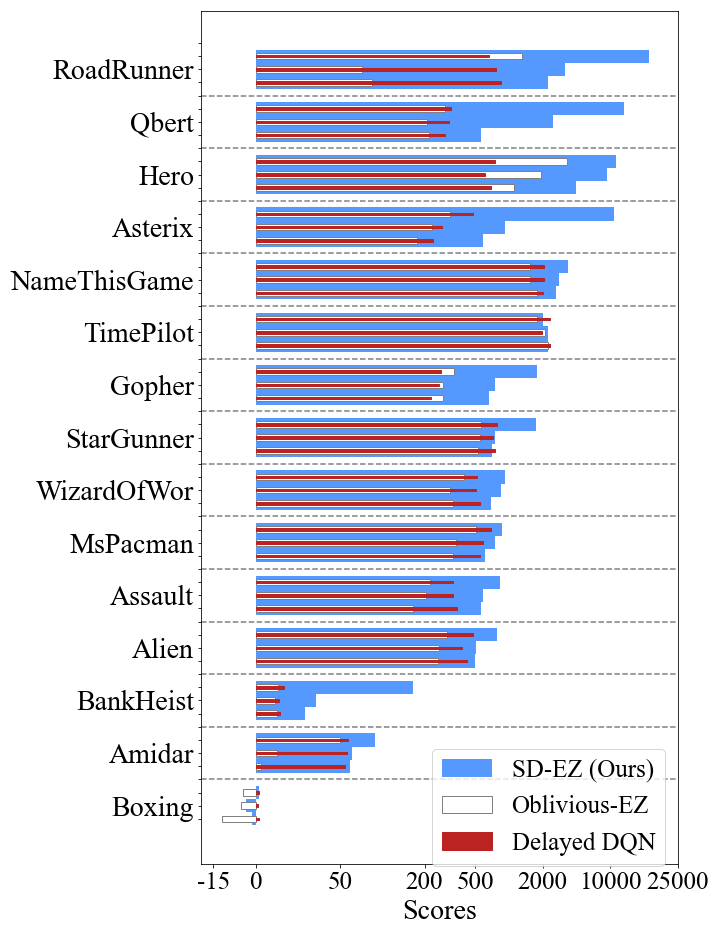}
\end{minipage}
\hfill
\begin{minipage}[b]{.5\textwidth}
\centering
\includegraphics[width=1\textwidth]{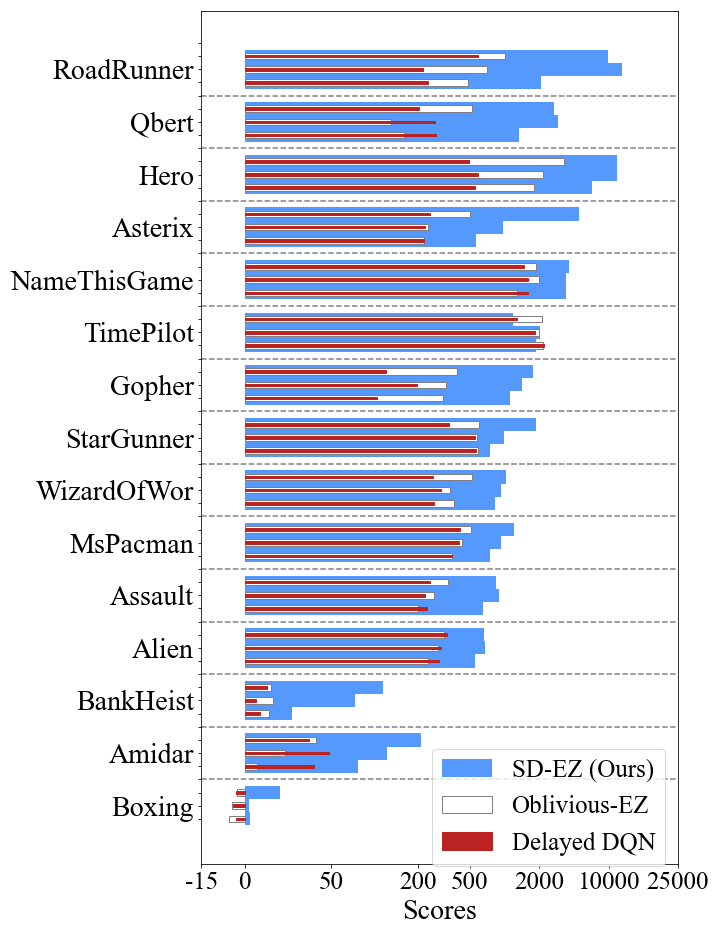}
\end{minipage}
\caption{Average score on 15 Atari games and delays $M\in\{5,15,25\}$ over 32 test episodes per trained seed. Delays appear from low to high values for each game. Left: Constant delay value; Right: Stochastic delay value within $\{0,\cdots, M\}$.}
\label{fig:all_score_bar_plot}
\end{figure}


To evaluate our theoretical findings, we subject DEZ to testing across 15 Atari games, each under both constant and stochastic delays. In either case, 
the delay value is revealed to the agent. We stress that the baseline scores are not consistent for the two types of delays that we simulate. Indeed, on constant delays, Delayed-Q \citep{Derman2021} tends to achieve higher scores than the oblivious version of EfficientZero, while Delayed-Q suffers more from stochastic delays than EfficientZero. As explained next, DEZ consistently achieves higher scores on both types of delays. \review{A full summary of scores and standard deviations for our algorithm and baseline methods is presented in Appx.~\ref{sec: atari table}, Tabs.~\ref{table: constant exp summary} and \ref{table: stoch exp summary}. }

\subsection{Constant delays}

We first analyze the scores produced by DEZ for constant delays and refer to Appx.~\ref{sec: const convergence plots} for the convergence plots of all games tested. 

Figure \ref{fig:all_score_bar_plot}(a) shows that our method achieves the best average score in 39 of the 45 experiments.
The oblivious version of EfficientZero confirms previously established theoretical findings \citep[Prop. 5.2]{Derman2021}: stationary policies alone are insufficient to achieve the optimal value. Given that Oblivious EfficientZero lacks adaptability and exclusively generates stationary policies, it is not surprising that it struggles to learn effectively in most games.

Our second baseline reference is Delayed-Q \citep{Derman2021}, recognized as a state-of-the-art algorithm for addressing constant delays in Atari games. However, it should be noted that Delayed-Q is a DQN-based algorithm, and its original implementation entailed training with one million samples. Our observations reveal that within the constraints of the allotted number of samples, Delayed-Q struggles to achieve efficient learning. 
On the other hand, Delayed-Q benefits from its perfect forward model, provided by the environment itself. We observe that in 21 out of 45 experiments, Delayed-Q achieves an average score that is at least 85\% of our method's result, even from the initial training iterations.

\textbf{Importance of the forward model. }
Despite the dependence of Delayed-Q on the Atari environment as its forward model, this baseline exhibits significant vulnerabilities when faced with limited data. To address this challenge, we harness the representation learning capabilities of EfficientZero, allowing us to acquire state representations and dynamics through the utilization of its self-supervised rich reward signal, among other features. Our findings illustrate that when we effectively learn a precise prediction model, delays become more manageable. However, it is worth noting that increased randomness in the environment can result in larger compounding errors in subsequent forward models. 

\subsection{Stochastic delays}

\begin{wrapfigure}[15]{r}{0.35\textwidth}
  \centering
  \includegraphics[width=0.35\textwidth]{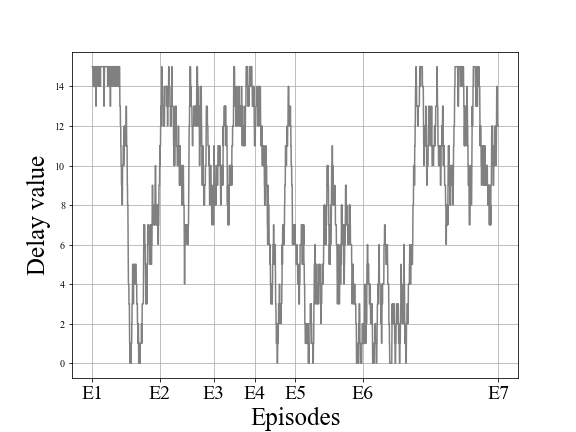}
  \caption{Random walk behavior of the stochastic delay across multiple episodes. No initial delay value is set at the start of the episode. Here, the maximal delay is 15.}
\end{wrapfigure}

For a more realistic scenario, we also test our method on stochastic delays.
For each Atari game, we use three stochastic delay settings. For each value  $M\in\{5,15,25\}$, we use the following formulation of the delay $z_t$ at time $t$:
\begin{equation*}
    \begin{aligned}
    z_0 & = M\\
    z_{t>0} & =
    \begin{cases}
      \min(z_t + 1, M) & \text{with probability } 0.2, \\
      \max(z_t - 1, 0) & \text{with probability }0.2, \\
      z_t  & \text{otherwise.}\\
    \end{cases}
  \end{aligned}
\end{equation*}

Note that the delay is never reinitialized to $M$ when the episode terminates. Instead, the delays remain the same for the beginning of the next episode. By doing that, we do not assume an initial delay value and cover a broader range of applications.


We opt to avoid the augmentation approach in the baselines due to its inherent complexity. Incorporating the pending action queue into the state space was deemed infeasible due to its exponential growth, even in simple environments such as Maze \citep{Derman2021}. In our scenario, the length of the pending queue ranges from zero to $M$ in cases involving stochastic delays. Consequently, the size of the augmented state space becomes $|\St|\times |\A|^{M}$. As the value of $M$ increases, the order of magnitude of this quantity becomes overwhelming.

 We report the average scores of DEZ and baselines for 15 Atari games in Figure~\ref{fig:all_score_bar_plot}(b), and refer to Appx.~\ref{sec: stoch convergence plots} for the convergence plots. Similarly to the constant case, we observe an impressive achievement, even slightly better, highlighting the added resilience of our method to stochastic delays. In \review{42} of 45 experiments, our method prevails with the highest average score. 
DEZ is also trained to incorporate very large delay values for evaluation in extreme scenarios. The performance in such cases is predominantly influenced by the specific characteristics of the environment. For example, in Asterix, we observe that DEZ struggles to learn efficiently with delays of $M=35$ and $M=45$. Contrarily, in Hero, our algorithm demonstrates comparable learning performance even with the largest delay value. 

\section{Discussion}
In this work, we formalized stochastic execution delay in MDPs without resorting to state augmentation. Although its analysis is more engaged, this setting leads to an intriguing insight: as in the constant delay case, we can restrict the policy search to the class of Markov policies and still reach optimal performance in the stochastic delay case. We introduced DEZ, a model-based approach that achieves state-of-the-art performance for delayed environments, thanks to an accurately learned world model. A natural extension of our approach would consider predicting the next state distribution instead of point prediction $\hat{s}_{t+1}$ to train a more robust world model and mitigate uncertainty in more stochastic environments.

DEZ heavily relies on a learned world model to infer future states. Devising a model-free method that addresses delayed execution while avoiding embedding remains an open question.  DEZ also assumes delay values are observed in real-time, which enabled us to backpropagate the forward model the corresponding number of times. Ridding of this constraint would add another layer of difficulty because the decision time that generated each action would be ignored. \review{For deterministic execution delay, we may estimate its value. For stochastic delays, we could adopt a robust approach to roll out multiple realizations of delays and act according to a worst-case criterion. }

Useful improvements may also be made to support continuous delays and remove drop and duplication of actions. In such a case, we may plan and derive continuous actions. Another modeling assumption that we made was that the delay process is independent of the MDP environment. Alternatively, one may study dependent delay processes, e.g., delay values that depend on the agent's current state or decision rule. Such extension is particularly meaningful in autonomous driving where the agent state must be interpreted from pictures of various information complexities.






\section{Reproducibility}

To further advance the cause of reproducibility in the field of Reinforcement Learning (RL), we are committed to transparency and rigor throughout our research process. We outline the steps taken to facilitate reproducibility:
(1) Detailed Methodology: Our commitment to transparency begins with a description of our methodology. We provide clear explanations of the experimental setup, algorithmic choices, and concepts introduced to ensure that fellow researchers can replicate our work.
(2)  Formal proofs are presented in the appendix. 
(3) Open implementation: To encourage exploration and research on delay-related challenges in RL, we include the code as part of the supplementary material. We include a README file with instructions on how to run both training and evaluation. Our repository is a fork of EfficientZero \citep{Ye2021} with the default parameters taken from the original paper.
(4) Acknowledging Distributed Complexity: It is important to note that, while we strive for reproducibility, the inherent complexity of distributed architectures, such as EfficientZero, presents certain challenges. Controlling the order of execution through code can be intricate, and we acknowledge that achieving exact-result replication in these systems may pose difficulties.
In summary, our dedication to reproducibility is manifested through transparent methodologies, rigorous formal proofs, and an openly accessible implementation. Although we recognize the complexities of distributed systems, we believe that our contributions provide valuable insight into delay-related issues in RL, inviting further collaboration within the research community.

\newpage

\bibliography{iclr2024_conference}
\bibliographystyle{iclr2024_conference}

\newpage
\appendix
\section*{Appendix}

\section{A formulation for stochastic delays}

\subsection{Proof of Theorem ~\ref{thm: sample path proba}}
\label{appendix: sample path proba}
\begin{proof}
By definition of conditional probability, for all measurable sets $A_1,\cdots, A_n \in \mathcal{B}(\Omega)$, we have 
\begin{equation}
\label{eq: definition conditional}
  \mathbb{P}^{\pi}(\cap_{i=0}^{n}A_i) = \left( \prod_{i=0}^{n-1}\mathbb{P}^{\pi}(A_i| \cap_{j=i+1}^{n} A_j)\right)\mathbb{P}^{\pi}(A_n).  
\end{equation}
Applying Eq.~(\ref{eq: definition conditional}) to $n = 2t+1$ on the following events:
\begin{align*}
    A_{2t+1} &:= \review{\{\tilde{s}_0 = s_0 \}}\\
    A_{2t} &:= \review{\{\tilde{z}_0=z_0 , \tilde{a}_0 = a_0\}}\\
    &\vdots \\
    A_2 &:= \review{\{\tilde{z}_{t-1}=z_{t-1}, \tilde{a}_{t-1} = a_{t-1}\}}\\
    A_1 &:= \review{\{\tilde{s}_t = s_t \}},\\
    A_0 &:= \review{\{\tilde{z}_{t}=z_{t}\}}
\end{align*}
we obtain that 
\begin{align*}  
&\mathbb{P}^\pi(\tilde{s}_0=s_0,\tilde{z}_0=z_0,\tilde{a}_0=a_0,\cdots, \tilde{a}_{t-1}=a_{t-1},\tilde{s}_t=s_t, \tilde{z}_t=z_t ) \\
  &= \Ppi(\tilde{s}_0 = s_0)\prod_{i = 0}^{t-1} \Ppi( \tilde{z}_i=z_i, \tilde{a}_i = a_i|\tilde{s}_0 = s_0, \tilde{z}_0=z_0, \tilde{a}_0 = a_0, \cdots, \tilde{s}_{i}  = s_{i})  \\
  &\hspace{90pt}\Ppi(\tilde{s}_{i+1} = s_{i+1}|\tilde{s}_0 = s_0, \tilde{z}_0=z_0, \tilde{a}_0 = a_0, \cdots, \tilde{z}_i=z_i, \tilde{a}_{i}  = a_{i})\\
  &\cdot \Ppi(\tilde{z}_{t}=z_{t}|\tilde{s}_0 = s_0, \tilde{z}_0=z_0, \tilde{a}_0 = a_0, \cdots, \tilde{z}_{t-1}=z_{t-1}, \tilde{a}_{t-1}  = a_{t-1},\tilde{s}_t = s_t )\\
  &\overset{(1)}{=} \Ppi(\tilde{s}_0 = s_0)\prod_{i = 0}^{t-1} \Ppi( \tilde{z}_i=z_i, \tilde{a}_i = a_i|\tilde{s}_0 = s_0, \tilde{z}_0=z_0, \tilde{a}_0 = a_0, \cdots, \tilde{s}_{i}  = s_{i})\\
  &\Ppi(\tilde{s}_{i+1} = s_{i+1}|\tilde{s}_0 = s_0, \tilde{z}_0=z_0, \tilde{a}_0 = a_0, \cdots, \tilde{z}_i=z_i, \tilde{a}_{i}  = a_{i}) \Ppi(\tilde{z}_{t}=z_{t})\\
  &= \Ppi(\tilde{s}_0 = s_0) \prod_{i = 0}^{t-1}\Ppi( \tilde{z}_i=z_i, \tilde{a}_i = a_i|\tilde{s}_0 = s_0, \tilde{z}_0=z_0, \tilde{a}_0 = a_0, \cdots, \tilde{s}_{i}  = s_{i})\\
  &\Ppi(\tilde{s}_{i+1} = s_{i+1}|\tilde{h}_{i} = (h_{i-1}, a_{i-1}, z_i, s_i), \tilde{a}_{i}  = a_{i})\Ppi(\tilde{z}_{t}=z_{t})\\
  &=  \Ppi(\tilde{s}_0 = s_0) \prod_{i = 0}^{t-1}\Ppi( \tilde{z}_i=z_i, \tilde{a}_i = a_i|\tilde{s}_0 = s_0, \tilde{z}_0=z_0, \tilde{a}_0 = a_0, \cdots, \tilde{s}_{i}  = s_{i}) P( s_{i+1}| s_i, a_{i})\zeta(z_{t})\\
  &= \Ppi(\tilde{s}_0 = s_0) \prod_{i = 0}^{t-1}\Ppi(  \tilde{a}_i = a_i|\tilde{h}_i=h_i) \Ppi(  \tilde{z}_i = z_i|\tilde{s}_0 = s_0, \tilde{z}_0=z_0, \tilde{a}_0 = a_0, \cdots, \tilde{s}_{i}  = s_{i}) P( s_{i+1}| s_i, a_{i})\zeta(z_{t})\\
    &= \Ppi(\tilde{s}_0 = s_0) \prod_{i = 0}^{t-1}\Ppi(  \tilde{a}_i = a_i|\tilde{h}_i=h_i) \zeta( z_i) P( s_{i+1}| s_i, a_{i})\zeta(z_{t})\\
    &= \mu(s_0)\left(\prod_{k = 0}^{t_z-1}\mathbbm{1}_{\bar{a}_k}(a_k) P( s_{k+1}| s_k,  a_{k})\right)\left( \prod_{l=t_z}^{t-1} \pi_{\tau_l}(h_l)(a_l) 
    P(s_{l+1}|s_l,a_l)\right) \zeta(z_{t})
\end{align*}
where equality $(1)$ comes from the fact that the delay distribution at $i+1$ is independent of the state $s_{i+1}$ observed then.
By Eq.~(\ref{eq: delayed policy with action queue}), 
\begin{align*}
 &\mathbb{P}^{\pi}(\tilde{s}_0 = s_0,\tilde{z}_0=z_0, \tilde{a}_0 = a_0, \cdots, \tilde{a}_{t-1}  = a_{t-1}, \tilde{s}_t  = s_t,\tilde{z}_t=z_t)\\
  &=  \Ppi(\tilde{s}_0 = s_0) \left( \prod_{i=0}^t \zeta(z_i)
\right) \prod_{i = 0}^{t_z-1} \Ppi(\tilde{a}_i = a_i|\tilde{h}_{i} = h_{i})
   \Ppi(\tilde{s}_{i+1} = s_{i+1}|\tilde{h}_i = (h_{i-1}, a_{i-1}, s_i, z_i), \tilde{a}_i  = a_i)\\
  &\hspace{90pt} \quad \prod_{k = t_z}^{t-1}\Ppi(\tilde{a}_k = a_k|\tilde{h}_k = h_k)
 \Ppi(\tilde{s}_{k+1} = s_{k+1}|\tilde{h}_k = (h_{k-1}, a_{k-1}, s_k, z_k), \tilde{a}_k  = a_k) \\
  &= \mu(s_0)\left(\prod_{k = 0}^{t_z-1}\mathbbm{1}_{\bar{a}_k}(a_k) P( s_{k+1}| s_k,  a_{k})\right)\left( \prod_{l=t_z}^{t-1} \pi_{\tau_l}(h_l)(a_l) 
P(s_{l+1}|s_l,a_l)\right) \left( \prod_{m=0}^t \zeta(z_m)
\right),
\end{align*}
where the first product is empty if and only if $z_0=0$, i.e., the decision taken at time $t=0$ is immediately executed. 
\end{proof}

\subsection{Proof of Theorem \ref{theorem: markov policy is sufficient}}
\label{appendix: markov policy is sufficient}
We first establish the following lemma, which will be used in the theorem's proof. 

\begin{lemma}
\label{lemma: independency}
For all $t > 0$ and $z_t>0$,
\begin{align}
    \label{eq: rm}
    \mathbb{P}^{\pi}
    (\tilde{s}_{t} = s' | \tilde{a}_{t} = a', \tilde{s}_{t-1} = s, \tilde{a}_{t-1} = a) =
     \mathbb{P}^{\pi}
     (\tilde{s}_{t} = s' |  \tilde{s}_{t-1} = s, \tilde{a}_{t-1} = a)
\end{align}
\end{lemma}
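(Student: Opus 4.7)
My approach is to show the equivalent statement that $\tilde{s}_t$ and $\tilde{a}_t$ are conditionally independent given $(\tilde{s}_{t-1},\tilde{a}_{t-1})$ on the event $\{\tilde{z}_t>0\}$. By Bayes' rule, the lemma is equivalent to
\[
\mathbb{P}^\pi(\tilde{a}_t=a'\mid \tilde{s}_t=s',\tilde{s}_{t-1}=s,\tilde{a}_{t-1}=a)=\mathbb{P}^\pi(\tilde{a}_t=a'\mid \tilde{s}_{t-1}=s,\tilde{a}_{t-1}=a),
\]
so the task is to prove that, when $z_t>0$, observing $\tilde{s}_t$ carries no extra information about $\tilde{a}_t$ once $(\tilde{s}_{t-1},\tilde{a}_{t-1})$ are known.

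\textbf{Key steps.} First, I invoke the definition of the effective decision time \eqref{eq: effective decision time}: if $z_t>0$, then $\tau_t\le t-1$, so by the process description around equation~(\ref{eq: delayed policy with action queue}), the action $\tilde{a}_t$ is measurable with respect to $\sigma(\tilde{h}_{\tau_t},\tilde{z}_t)$. In particular, $\tilde{a}_t$ depends on the trajectory only through data observed strictly before time $t$. Second, I expand the conditional distribution of $\tilde{a}_t$ by marginalizing over the possible histories $\tilde{h}_{\tau_t}$ up to $\tau_t\le t-1$:
\[
\mathbb{P}^\pi(\tilde{a}_t=a'\mid \tilde{s}_t=s',\tilde{s}_{t-1}=s,\tilde{a}_{t-1}=a)=\sum_{h_{\tau_t}}\pi_{\tau_t}(h_{\tau_t})(a')\,\mathbb{P}^\pi\!\left(\tilde{h}_{\tau_t}=h_{\tau_t}\mid \tilde{s}_t=s',\tilde{s}_{t-1}=s,\tilde{a}_{t-1}=a\right).
\]
Third, I apply Bayes again to the posterior on $\tilde{h}_{\tau_t}$ and use the Markov structure of the transition kernel --- formalized by Theorem~\ref{thm: sample path proba}, which shows that the one-step transition factor in the sample-path probability is $P(s_t\mid s_{t-1},a_{t-1})$, independent of the earlier history --- to conclude
\[
\mathbb{P}^\pi\!\left(\tilde{s}_t=s'\mid \tilde{h}_{\tau_t}=h_{\tau_t},\tilde{s}_{t-1}=s,\tilde{a}_{t-1}=a\right)=P(s'\mid s,a)=\mathbb{P}^\pi\!\left(\tilde{s}_t=s'\mid \tilde{s}_{t-1}=s,\tilde{a}_{t-1}=a\right).
\]
This lets the $\tilde{s}_t=s'$ conditioning cancel in the posterior on $\tilde{h}_{\tau_t}$, yielding $\mathbb{P}^\pi(\tilde{h}_{\tau_t}=h_{\tau_t}\mid \tilde{s}_t=s',\tilde{s}_{t-1}=s,\tilde{a}_{t-1}=a)=\mathbb{P}^\pi(\tilde{h}_{\tau_t}=h_{\tau_t}\mid \tilde{s}_{t-1}=s,\tilde{a}_{t-1}=a)$. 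Substituting back into the sum recovers $\mathbb{P}^\pi(\tilde{a}_t=a'\mid \tilde{s}_{t-1}=s,\tilde{a}_{t-1}=a)$, establishing the identity.

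\textbf{Main obstacle.} The nontrivial part is handling the fact that $\tau_t$ is itself random and $\sigma(\tilde{h}_t)$-measurable, so the ``history up to $\tau_t$'' is itself a random object. I will deal with this by partitioning according to the value of $\tau_t\in\{t-M,\dots,t-1\}$ (permissible because $z_t>0$) and applying the argument inside each slice; on each slice the corresponding history has deterministic length and the Markov property applies directly. The hypothesis $z_t>0$ is used crucially here: it rules out $\tau_t=t$, which would make $\tilde{a}_t$ depend on $\tilde{s}_t$ through $\pi_t$ and break the conditional independence.
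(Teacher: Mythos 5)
Your proof is correct and rests on the same key fact as the paper's one-line argument: when $z_t>0$ the action executed at time $t$ is governed by a decision rule using only information available strictly before $t$ (since $\tau_t\le t-1$), so conditionally on $(\tilde{s}_{t-1},\tilde{a}_{t-1})$ both sides reduce to $P(s'\mid s,a)$; your Bayes flip and marginalization over $\tilde{h}_{\tau_t}$ simply spell out the conditional-independence claim the paper asserts directly. One wording caveat: $\tilde{a}_t$ is not \emph{measurable} with respect to $\sigma(\tilde{h}_{\tau_t},\tilde{z}_t)$ (it is a randomized draw); what your expansion actually needs---and what \eqref{eq: delayed policy with action queue} provides via the tower property, since $(\tilde h_{\tau_t},\tilde s_{t-1},\tilde a_{t-1},\tilde s_t)$ are all functions of $\tilde h_t$---is that the conditional law of $\tilde{a}_t$ given the full history depends only on $h_{\tau_t}$, so you should phrase that step in terms of the conditional distribution rather than measurability.
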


\begin{proof}
Since $z_t>0$ by assumption, the state variable $\tilde{s}_{t}$ is independent of $\tilde{a}_{t}$, so that:
$\mathbb{P}^{\pi}
    (\tilde{s}_{t} = s' | \tilde{a}_{t} = a', \tilde{s}_{t-1} = s, \tilde{a}_{t-1} = a) = P(s'|s,a)= 
     \mathbb{P}^{\pi}
     (\tilde{s}_{t} = s' |  \tilde{s}_{t-1} = s, \tilde{a}_{t-1} = a).$

\end{proof}

\begin{theorem*}
Let $\pi \in \Pi^{\textsc{HR}}$ be a history dependent policy. For all $s_0\in\St$, there exists a Markov policy $\pi' \in \Pi^{\textsc{MR}}$ that yields the same process distribution as $\pi$, i.e., 
\begin{align}
\label{eq: HD_to_MD}
   \mathbb{P}^{\pi'}(\tilde{s}_{\tau_t}  = s', \tilde{a}_t  = a | \tilde{s}_0  = s_0, \tilde{z}=z) = 
\mathbb{P}^{\pi}(\tilde{s}_{\tau_t}  = s', \tilde{a}_t  = a | \tilde{s}_0  = s_0, \tilde{z}=z),    
\end{align}
for all $ a\in\A, s'\in \St,t\geq 0,$ \review{and $\tilde{z}:=(\tilde{z}_t)_{t\in\mathbb{N}}$ the whole delay process.} 
\end{theorem*}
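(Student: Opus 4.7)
The plan is to follow the classical Derman--Strauch construction, adapted to the execution-delay setting: define $\pi'$ by marginalizing $\pi$ against the conditional state distribution at each decision time, then verify equality of the joint laws by induction on $t$.

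\textbf{Step 1 (Construction of $\pi'$).} For each decision time $u \ge 0$ and state $s \in \mathcal{S}$ with $\mathbb{P}^\pi(\tilde{s}_u = s \mid \tilde{s}_0 = s_0, \tilde{z} = z) > 0$, set
\[\pi'_u(a \mid s) := \mathbb{P}^{\pi}\!\left(\tilde{a}_{e_u} = a \,\middle|\, \tilde{s}_u = s,\, \tilde{s}_0 = s_0,\, \tilde{z} = z\right),\]
where $e_u := u + z_u$ is the earliest execution time triggered by a decision at $u$; choose $\pi'_u(\cdot \mid s)$ arbitrarily on the null set. The product form of Theorem~\ref{thm: sample path proba} shows that this value equals $\mathbb{P}^{\pi}(\tilde{a}_t = a \mid \tilde{s}_u = s, \tilde{s}_0, \tilde{z})$ for \emph{every} execution index $t$ with $\tau_t = u$ --- a consistency fact needed because a single decision at $u$ may govern multiple executed actions.

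\textbf{Step 2 (Reduction to state marginals).} Because $\tilde{z}$ is independent of the state--action process, conditioning on $\tilde{z} = z$ turns the effective decision times $\tau_t$ and the switch index $t_z$ into deterministic sequences. Under the Markov rule one has
\[\mathbb{P}^{\pi'}(\tilde{s}_{\tau_t} = s',\, \tilde{a}_t = a \mid \tilde{s}_0,\, \tilde{z}) \;=\; \mathbb{P}^{\pi'}(\tilde{s}_{\tau_t} = s' \mid \tilde{s}_0,\, \tilde{z}) \cdot \pi'_{\tau_t}(a \mid s'),\]
and by the definition of $\pi'$ the second factor already equals $\mathbb{P}^{\pi}(\tilde{a}_t = a \mid \tilde{s}_{\tau_t} = s', \tilde{s}_0, \tilde{z})$. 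It therefore suffices to prove that the state marginals at the decision times agree: $\mathbb{P}^{\pi'}(\tilde{s}_{\tau_t} = s' \mid \tilde{s}_0, \tilde{z}) = \mathbb{P}^{\pi}(\tilde{s}_{\tau_t} = s' \mid \tilde{s}_0, \tilde{z})$ for all $t$.

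\textbf{Step 3 (Induction and main obstacle).} I would prove the state-marginal equality by induction on $t$. The base case $t < t_z$ is immediate, since both processes execute the same default queue $\bar{a}$ and hence induce a marginal computed purely from $\mu$ and $P$. For the inductive step I would expand $\mathbb{P}^{\pi}(\tilde{s}_{\tau_{t+1}} = s' \mid \tilde{s}_0, \tilde{z})$ through a Chapman--Kolmogorov decomposition using Theorem~\ref{thm: sample path proba}, and replace every history-dependent factor $\pi_{\tau_l}(h_l)(a_l)$ by the Markov factor $\pi'_{\tau_l}(a_l \mid s_{\tau_l})$; Lemma~\ref{lemma: independency} guarantees that whenever $z_l > 0$ the transition $\tilde{s}_{l-1} \to \tilde{s}_l$ decouples from $\tilde{a}_l$, which is precisely what keeps the state marginal invariant under this substitution. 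The principal obstacle is the indexing mismatch between execution and decision times: successive executed actions $\tilde{a}_t, \tilde{a}_{t+1}, \dots$ may share a common $\tau_t = u$, so the Markov substitution must be simultaneously consistent for every such $t$. Showing that the conditional law of $\tilde{a}_t$ given $\tilde{s}_{\tau_t}$ depends on $t$ only through $\tau_t$ is where the product structure of Theorem~\ref{thm: sample path proba} and the delay-process independence combine to close the induction.
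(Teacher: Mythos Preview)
Your proposal is correct and follows essentially the same route as the paper: construct $\pi'$ by conditioning the executed-action law under $\pi$ on the state at the effective decision time, reduce the joint equality~\eqref{eq: HD_to_MD} to equality of state marginals, and establish the latter by induction using Lemma~\ref{lemma: independency} together with the product form of Theorem~\ref{thm: sample path proba}. The only cosmetic difference is that you index $\pi'_u$ via the earliest execution time $e_u$ while the paper indexes via $\tau_t$; your explicit treatment of the consistency issue (one decision at $u$ governing several executed actions with $\tau_t=u$) is in fact cleaner than the paper's, which leaves this well-definedness implicit.
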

\begin{proof}
If $M= 0$, the result holds true by standard RL theory \citep{puterman2014markov}[Thm 5.5.1]. 
Assume now that $M >0$, fix $s\in \St$ and $z=(z_t)_{t\in\mathbb{N}}$. The variable $z$ is exogenous, and we shall consider the corresponding sequence of effective decision times $(\tau_t)_{t\in\mathbb{N}}$. We construct a Markov policy $\pi'$ for these times only while for other times, $\pi'$ can be anything else.  Let thus $\pi' :=(\pi_{\tau_t}')_{t\in\mathbb{N}}$ with $\pi_{\tau_t}': \{s\} \rightarrow \Delta_{\A}$ defined  as 
\begin{equation}
    \label{eq: policy_tag_markov}
    \begin{split}
        \pi'_{\tau_t}(s')(a) := \begin{cases}
        \mathbb{P}^{\pi}(\tilde{a}_t = a |\tilde{s}_{\tau_t}  =s', \tilde{s}_0  = s, \tilde{z}=z) &\text{ if } t\geq t_z,\\
        \mathbbm{1}_{\Bar{a}_t}(a) \text{ otherwise,}
        \end{cases}
        \quad \forall s'\in\St, a\in\A .
    \end{split}
\end{equation}

\review{Recall the definition of the time the SED-MDP performs a switch and uses the agent's policy $t_{z}:= \min\{z_0, z_1+1,\dots, M-1+z_{M-1}\}$.
 For the policy $\pi'$ defined in Eq.~(\ref{eq: policy_tag_markov}), we prove the theorem by induction on $t\geq t_z$, since for $t<t_z$, the decision rule being applied is $\mathbbm{1}_{\Bar{a}_t}$ regardless of the policy. For $t=t_z$, by Thm.~\ref{thm: sample path proba}, we have:}
 \begin{align*}
     \mathbb{P}^{\pi}(\tilde{s}_{\tau_{t_z}}  = s', \tilde{a}_{t_z}  = a | \tilde{s}_0  = s_0) &= \frac{\mathbb{P}^{\pi}(\tilde{s}_{\tau_{t_z}}  = s', \tilde{a}_{t_z}  = a , \tilde{s}_0  = s_0)}{\Ppi(\tilde{s}_0  = s_0)}\\
     &= \frac{1}{\Ppi(\tilde{s}_0  = s_0)}\sum_{\substack{s_1,\cdots,  s_{t_z-1},\\ a_0,\cdots,a_{t_z-1}}}\mathbb{P}^{\pi}(\tilde{s}_{\tau_{t_z}}  = s', \tilde{a}_{t_z}  = a, \tilde{a}_{t_z-1}  = a_{t_z-1},\cdots,  \tilde{a}_{0}  = a_{0}, \tilde{s}_0  = s_0)\\
     &=\sum_{\substack{s_1,\cdots,  s_{t_z-1},\\ a_0,\cdots,a_{t_z-1}}}\prod_{k = 0}^{t_z-1}\mathbbm{1}_{\bar{a}_k}(a_k) P( s_{k+1}| s_k,  a_{k})
 \end{align*}

We observe that the expression does not depend on the prescribed policy, so it equals $\mathbb{P}^{\pi'}(\tilde{s}_{\tau_{t_z}}  = s', \tilde{a}_{t_z}  = a | \tilde{s}_0  = s_0)$ and the induction base holds.


Assume that Eq.~(\ref{eq: HD_to_MD}) holds up until $t = n-1$, with $t\geq t_z$.
Let denote $t'= n$, we aim to prove that 
\begin{align}
&\mathbb{P}^{\pi}(\tilde{s}_{t'}  = s' | \tilde{s}_0  = s) = \mathbb{P}^{\pi'}(\tilde{s}_{t'}  = s' | \tilde{s}_0  = s)
\label{eq:equal prob of seeing s_t}.
\end{align}

Then, we can write
\begin{align*}
&\mathbb{P}^{\pi}(\tilde{s}_{t'}  = s' | \tilde{s}_0  = s)\\
&= \sum_{\substack{s_{t}\in\St,\\ a_{t}\in\A}}\mathbb{P}^{\pi}(\tilde{s}_{t'}  = s' , \tilde{s}_{t}  = s_{t}, \tilde{a}_{t}  = a_{t}|\tilde{s}_0  = s) \\
&= \sum_{\substack{s_{t}\in\St,\\ a_{t}\in\A}}\mathbb{P}^{\pi}(\tilde{s}_{t'}  = s' |\tilde{a}_{t}  = a_{t}, \tilde{s}_{t}  = s_{t},   \tilde{s}_0  = s)\\
&\qquad\qquad \mathbb{P}^{\pi}(\tilde{a}_{t}  = a_{t}| \tilde{s}_{t}  = s_{t},   \tilde{s}_0  = s) \mathbb{P}^{\pi}(\tilde{s}_{t}  = s_{t} | \tilde{s}_0  = s)\\
&= \sum_{\substack{s_{t}\in\St,\\ a_{t}\in\A}}P(s' | s_{t}, a_{t})
 \mathbb{P}^{\pi}(\tilde{a}_{t}  = a_{t}| \tilde{s}_{t}  = s_{t},   \tilde{s}_0  = s)\mathbb{P}^{\pi}(\tilde{s}_{t}  = s_{t} | \tilde{s}_0  = s).
\end{align*}
By Eq.~\ref{eq: delayed policy with action queue}, $\tilde{a}_{t}$ only depends on state-action sequences up to $\tau_t$. Let's differentiate on the value of $z_t$.

If $z_t=0$, then $\tau_t =t$ and by construction of $\pi'$ we can write:
\begin{align*}
&\mathbb{P}^{\pi}(\tilde{a}_{t} = a_{t}| \tilde{s}_{t}  = s_{t}, \tilde{s}_{0}  = s )= \mathbb{P}^{\pi}(\tilde{a}_{t} = a_{t}| \tilde{s}_{\tau_t}  = s_{t}, \tilde{s}_{0}  = s )= \mathbb{P}^{\pi'}(\tilde{a}_{t} = a_{t}| \tilde{s}_{\tau_t}  = s_{t}, \tilde{s}_{0}  = s ).
\end{align*}
\begin{align*}
&\Rightarrow \mathbb{P}^{\pi}(\tilde{s}_{t'}  = s' | \tilde{s}_0  = s) =\sum_{\substack{s_{t}\in\St,\\ a_{t}\in\A}}P(s' | s_{t}, a_{t}) \review{\mathbb{P}^{\pi'}}(\tilde{a}_{t}  = a_{t}| \tilde{s}_{\tau_t}  = s_{t}, \tilde{s}_0  = s) \mathbb{P}^{\pi}(\tilde{s}_{t}  = s_{t} | \tilde{s}_0  = s).
\end{align*}

If $z_t > 0$, then $\tau_t <t$ and we can write:
$\mathbb{P}^{\pi}(\tilde{a}_{t} = a_{t}| \tilde{s}_{t}  = s_{t}, \tilde{s}_{0}  = s )=\mathbb{P}^{\pi}(\tilde{a}_{t} = a_{t}| \tilde{s}_{0} = s)$
\begin{align*}
&\Rightarrow  \mathbb{P}^{\pi}(\tilde{s}_{t'}  = s' | \tilde{s}_0  = s) =\sum_{\substack{s_{t}\in\St,\\ a_{t}\in\A}}P(s' | s_{t}, a_{t}) \mathbb{P}^{\pi}(\tilde{a}_{t}  = a_{t}| \tilde{s}_0  = s) \mathbb{P}^{\pi}(\tilde{s}_{t}  = s_{t} | \tilde{s}_0  = s).
\end{align*}
\normalsize 
\-\hspace{1cm} \review{Since $t=n-1$, by the induction hypothesis we can rewrite}
\begin{align*}
\mathbb{P}^{\pi}(\tilde{a}_{t}  = a_{t}| \tilde{s}_0  = s)  
&= \sum_{s_{\tau_t}\in\St}\mathbb{P}^{\pi}(\tilde{a}_{t}  = a_{t}, \tilde{s}_{\tau_t}  = s_{\tau_t}| \tilde{s}_0  = s)\\
&= \sum_{s_{\tau_t}\in\St}\mathbb{P}^{\pi'}(\tilde{a}_{t}  = a_{t}, \tilde{s}_{\tau_t}  = s_{\tau_t}| \tilde{s}_0  = s)\\
&= \mathbb{P}^{\pi'}(\tilde{a}_{t}  = a_{t}| \tilde{s}_0  = s)\\ 
\Rightarrow \mathbb{P}^{\pi}(\tilde{s}_{t'}  = s' | \tilde{s}_0  = s)
&=\sum_{\substack{s_{t}\in\St,\\ a_{t}\in\A}}P(s' | s_{t}, a_{t}) \mathbb{P}^{\pi'}(\tilde{a}_{t}  = a_{t}| \tilde{s}_0  = s) \mathbb{P}^{\pi}(\tilde{s}_{t}  = s_{t} | \tilde{s}_0  = s).
\end{align*}
\review{In the two cases,} we now study the last term in the above equations, $\mathbb{P}^{\pi}(\tilde{s}_{t}  = s_{t} | \tilde{s}_0).$ 
We have
\scriptsize{\scriptsize{\begin{align*}
 &\mathbb{P}^{\pi}(\tilde{s}_{t}  = s_{t} | \tilde{s}_0  = s)\nonumber\\
&= \sum_{\substack{s_{t-1},\cdots, s_{t_z}\in\St\\ a_{t-1},\cdots, a_{t_z}\in\A}}\mathbb{P}^{\pi}(\tilde{s}_{t}  = s_{t},\tilde{s}_{t-1} = s_{t-1}, \tilde{a}_{t-1}= a_{t-1}, \cdots, \tilde{s}_{t_z} = s_{t_z}, \tilde{a}_{t_z} = a_{t_z}| \tilde{s}_{0}  = s )\nonumber\\
&= \sum_{\substack{s_{t-1},\cdots, s_{t_z}\in\St\\ a_{t-1},\cdots, a_{t_z}\in\A}}\mathbb{P}^{\pi}(\tilde{s}_{t}  = s_{t}|\tilde{s}_{t-1} = s_{t-1}, \tilde{a}_{t-1}= a_{t-1},  \cdots, \tilde{s}_{t_z} = s_{t_z}, \tilde{a}_{t_z} = a_{t_z}, \tilde{s}_{0}  = s )\nonumber\\
&\quad \mathbb{P}^{\pi}(\tilde{s}_{t-1} = s_{t-1}, \tilde{a}_{t-1}= a_{t-1},  \cdots, \tilde{s}_{t_z} = s_{t_z}, \tilde{a}_{t_z} = a_{t_z}| \tilde{s}_{0}  = s)\nonumber\\
&= \sum_{\substack{s_{t-1},\cdots, s_{t_z}\in\St\\ a_{t-1},\cdots, a_{t_z}\in\A}}P( s_{t}| s_{t-1},  a_{t-1})\nonumber\\
&\quad\mathbb{P}^{\pi}(\tilde{s}_{t-1} = s_{t-1}, \tilde{a}_{t-1}= a_{t-1},  \cdots, \tilde{s}_{t_z} = s_{t_z}, \tilde{a}_{t_z} = a_{t_z}| \tilde{s}_{0}  = s)\nonumber\\
&= \sum_{\substack{s_{t-1},\cdots, s_{t_z}\in\St\\ a_{t-1},\cdots, a_{t_z}\in\A}}P( s_{t}| s_{t-1},  a_{t-1})\nonumber\\
&\quad \mathbb{P}^{\pi}(\tilde{s}_{t-1} = s_{t-1}| \tilde{a}_{t-1}= a_{t-1}, \tilde{s}_{t-2}= s_{t-2}, \tilde{a}_{t-2}= a_{t-2},  \cdots, \tilde{s}_{t_z} = s_{t_z}, \tilde{a}_{t_z} = a_{t_z}, \tilde{s}_{0}  = s)\nonumber\\
&\quad\mathbb{P}^{\pi}(\tilde{a}_{t-1}= a_{t-1}, \tilde{s}_{t-2}= s_{t-2}, \tilde{a}_{t-2}= a_{t-2},  \cdots, \tilde{s}_{t_z} = s_{t_z}, \tilde{a}_{t_z} = a_{t_z}|\tilde{s}_{0}  = s)\nonumber\\
&\overset{\textrm{Lemma~\ref{lemma: independency}}}{=} \sum_{\substack{s_{t-1},\cdots, s_{t_z}\in\St\\ a_{t-1},\cdots, a_{t_z}\in\A}}P( s_{t}| s_{t-1},  a_{t-1})P(s_{t-1}|  s_{t-2}, a_{t-2} ) \nonumber\\
&\quad  \mathbb{P}^{\pi}(\tilde{a}_{t-1}= a_{t-1}, \tilde{s}_{t-2}= s_{t-2}, \tilde{a}_{t-2}= a_{t-2}, \cdots, \tilde{s}_{t_z} = s_{t_z}, \tilde{a}_{t_z} = a_{t_z}|\tilde{s}_{0}  = s)\nonumber\\
&= \sum_{\substack{s_{t-1},\cdots, s_{t_z}\in\St\\ a_{t-1},\cdots, a_{t_z}\in\A}}P( s_{t}| s_{t-1},  a_{t-1})P(s_{t-1}|  s_{t-2}, a_{t-2} ) \nonumber\\
&\quad  \mathbb{P}^{\pi}(\tilde{a}_{t-1}= a_{t-1}| \tilde{s}_{t-2}= s_{t-2}, \tilde{a}_{t-2}= a_{t-2} \cdots, \tilde{s}_{t_z} = s_{t_z}, \tilde{a}_{t_z} = a_{t_z}|\tilde{s}_{0}  = s)\nonumber\\
&\quad \mathbb{P}^{\pi}(\tilde{s}_{t-2}= s_{t-2}, \tilde{a}_{t-2}= a_{t-2} \cdots, \tilde{s}_{t_z} = s_{t_z}, \tilde{a}_{t_z} = a_{t_z}|\tilde{s}_{0}  = s)\nonumber\\
&=\nonumber\\
&\vdots\nonumber\\
&= \sum_{\substack{s_{t-1},\cdots, s_{t_z}\in\St\\ a_{t-1},\cdots, a_{t_z}\in\A}}\biggr( \prod_{i = 0}^{t-t_z} P(s_{t-i}|s_{t-i-1}, a_{t-i-1}) \biggr) \nonumber\\
&\quad\biggr( \prod_{j = 1}^{t-t_z-1}\mathbb{P}^{\pi}(\tilde{a}_{t-j} = a_{t-j}|\tilde{s}_{t-j-1} = s_{t-j-1},\tilde{a}_{t-j-1}= a_{t-j-1},\cdots, \tilde{s}_{t_z} = s_{t_z}, \tilde{a}_{t_z} = a_{t_z} ,\tilde{s}_{0}  = s )\biggr)\nonumber\\
&\quad\mathbb{P}^{\pi}(\tilde{s}_{t_z}  = s_{t_z}|\tilde{a}_{t_z} = a_{t_z}, \tilde{s}_{0}  = s)\mathbb{P}^{\pi}(\tilde{a}_{t_z} = a_{t_z}| \tilde{s}_{0}  = s)\nonumber\\
&\overset{(1)}{=}\sum_{\substack{s_{t-1},\cdots, s_{t_z}\in\St\\ a_{t-1},\cdots, a_{t_z}\in\A}}\biggr( \prod_{i = 0}^{t-t_z} P(s_{t-i}|s_{t-i-1}, a_{t-i-1}) \biggr)\biggr( \prod_{j = 1}^{t-t_z-1}\mathbb{P}^{\pi}(\tilde{a}_{t-j} = a_{t-j}|\tilde{s}_{\tau_{t-j}}  = s_{\tau_{t-j}} )\biggr)\nonumber\\
&\quad\mathbb{P}^{\pi}(\tilde{s}_{t_z}  = s_{t_z}|\tilde{a}_{t_z} = a_{t_z}, \tilde{s}_{0}  = s)\mathbb{P}^{\pi}(\tilde{a}_{t_z} = a_{t_z}| \tilde{s}_{0}  = s)\nonumber
\end{align*}

We continue and write
\begin{align*}
&\overset{(2)}{=}\sum_{\substack{s_{t-1},\cdots, s_{t_z}\in\St\\ a_{t-1},\cdots, a_{t_z}\in\A}}\biggr( \prod_{i = 0}^{t-t_z} P(s_{t-i}|s_{t-i-1}, a_{t-i-1}) \biggr)\biggr( \prod_{j = 1}^{t-t_z-1} q_{d_{\tau_{t-j}}'(s_{\tau_{t-j}})}(a_{t-j})\biggr)\nonumber\\
&\quad\mathbb{P}^{\pi}(\tilde{s}_{t_z}  = s_{t_z}|\tilde{a}_{t_z} = a_{t_z}, \tilde{s}_{0}  = s)\mathbb{P}^{\pi}(\tilde{a}_{t_z} = a_{t_z}| \tilde{s}_{0}  = s)\nonumber\\
&\overset{(3)}{=}\sum_{\substack{s_{t-1},\cdots, s_{t_z}\in\St\\ a_{t-1},\cdots, a_{t_z}\in\A}}\biggr( \prod_{i = 0}^{t-t_z} P(s_{t-i}|s_{t-i-1}, a_{t-i-1}) \biggr)\biggr( \prod_{j = 1}^{t-t_z-1} q_{d_{\tau_{t-j}}'(s_{\tau_{t-j}})}(a_{t-j})\biggr)\nonumber\\
&\quad\mathbb{P}^{\pi}(\tilde{s}_{t_z}  = s_{t_z}|\tilde{a}_{t_z} = a_{t_z}, \tilde{s}_{0}  = s)  \sum_{\substack{s'_{\tau_{t_z}}\in\St}} \biggr( q_{d_{\tau_{t_z}}'(s_{\tau_{t_z}})}(a_{t_z}  \biggr)   \nonumber
\end{align*}}}
\normalsize

In $(2)$ and $(3)$ we used \eqref{eq: delayed policy with action queue}.
We now analyze the last term implying on $\pi$ and show that it is not policy dependant:
\scriptsize{\scriptsize{\begin{align*}
&\mathbb{P}^{\pi}(\tilde{s}_{t_z}  = s_{t_z}|\tilde{a}_{t_z} = a_{t_z}, \tilde{s}_{0}  = s) =\mathbb{P}^{\pi}(\tilde{s}_{t_z}  = s_{t_z}|\tilde{s}_{0}  = s)\nonumber\\ 
&= \sum_{\substack{s_{t_z-1},\cdots,s_1\in\St\\ \substack{a_{t_z-1},\cdots, a_0\in\A}}} \mathbb{P}^{\pi}(\tilde{s}_{t_z}  = s_{t_z}, \tilde{s}_{t_z-1}  = s_{t_z-1}, \tilde{a}_{t_z-1}  = a_{t_z-1},\cdots, \tilde{s}_{1}  = s_{1}, \tilde{a}_{1}  = a_{1}, \tilde{a}_{0}  = a_{0}| \tilde{s}_{0}  = s)\\
&= \sum_{\substack{s_{t_z-1},\cdots,s_1\in\St\\ \substack{a_{t_z-1},\cdots, a_0\in\A}}} \frac{1}{\mathbb{P}^{\pi}( \tilde{s}_{0}  = s)}\mathbb{P}^{\pi}(\tilde{s}_{t_z}  = s_{t_z}, \tilde{s}_{t_z-1}  = s_{t_z-1}, \tilde{a}_{t_z-1}  = a_{t_z-1},\cdots, \tilde{s}_{1}  = s_{1}, \tilde{a}_{1}  = a_{1}, \tilde{a}_{0}  = a_{0}, \tilde{s}_{0}  = s)\\
&\overset{(4)}{=} \sum_{\substack{s_{t_z-1},\cdots,s_1\in\St\\ \substack{a_{t_z-1},\cdots, a_0\in\A}}}\frac{1}{\mu(s)}\mu(s)\left(\prod_{i = 0}^{t_z-1} P(s_{i+1}|s_{i}, a_{i})\review{\mathbbm{1}_{\bar{a}_i}( a_{i})}\right)
=\sum_{\substack{s_{t_z-1},\cdots,s_1\in\St\\ \substack{a_{t_z-1},\cdots, a_0\in\A}}}\left(\prod_{i = 0}^{t_z-1} P(s_{i+1}|s_{i}, a_{i})\review{\mathbbm{1}_{\bar{a}_i}( a_{i})}\right),
\end{align*}}}
\normalsize
where $(4)$ results from Th. \ref{thm: sample path proba}.

Thus, if we decompose $\mathbb{P}^{\pi'}(\tilde{s}_{t'}  = s' | \tilde{s}_0  = s)$ 
according to the exact same derivation as we did for  $\mathbb{P}^{\pi}(\tilde{s}_{t'}  = s' | \tilde{s}_0  = s)$, we obtain that at $t'=n$:
\begin{equation}
\label{eq: s_t given s_0}
    \mathbb{P}^{\pi}(\tilde{s}_{t'}  = s' | \tilde{s}_0  = s) =
    \mathbb{P}^{\pi'}(\tilde{s}_{t'}  = s' | \tilde{s}_0  = s).
\end{equation}
As a preceding step in the induction process, this results holds at $\tau_{t'} \leq t'=n$:
\begin{equation}
\label{eq: s_tau_t given s_0}
    \mathbb{P}^{\pi}(\tilde{s}_{\tau_{t'}}  = s' | \tilde{s}_0  = s) =
    \mathbb{P}^{\pi'}(\tilde{s}_{\tau_{t'}}  = s' | \tilde{s}_0  = s).
\end{equation}

As a result, at $t'=n$ we have
\begin{align*}
\mathbb{P}^{\pi'}(\tilde{s}_{\tau_{t'}}  = s', \tilde{a}_{t'}  = a | \tilde{s}_0  = s) 
&{=} \mathbb{P}^{\pi'}( \tilde{a}_{t'}  = a | \tilde{s}_{\tau_{t'}}  = s', \tilde{s}_0  = s)\mathbb{P}^{\pi'}( \tilde{s}_{\tau_{t'}}  = s'| \tilde{s}_0  = s)\\
&\overset{(a)}{=} \mathbb{P}^{\pi'}( \tilde{a}_{t'}  = a | \tilde{s}_{\tau_{t'}}  = s', \tilde{s}_0  = s) \mathbb{P}^{\pi}( \tilde{s}_{\tau_{t'}}  = s'| \tilde{s}_0  = s)\\
&\overset{(b)}{=} \mathbb{P}^{\pi}(\tilde{a}_{t'} = a |\tilde{s}_{\tau_{t'}}  =s', \tilde{s}_0  = s )\mathbb{P}^{\pi}( \tilde{s}_{\tau_{t'}}  = s'| \tilde{s}_0  = s)\\
& { = } \mathbb{P}^{\pi}(\tilde{s}_{\tau_{t'}}  = s', \tilde{a}_{t'}  = a | \tilde{s}_0  = s),
\end{align*}
where $(a)$ follows from Eq.~\ref{eq: s_tau_t given s_0};
$(b)$ \review{by construction of $\pi'_{\tau_t}(s')(a)$ in Eq. \ref{eq: policy_tag_markov}.}
Finally, assuming it is satisfied at $t=n-1$, the induction step is proved for $t=n$, which ends the proof. 
 
\end{proof}

\newpage

\section{Algorithm}
\label{appendix: algorithm details}

\review{We briefly describe the EfficientZero algorithm from \citep{Ye2021} and highlight the places where novelties are introduced in DEZ.
In EfficientZero, there are several actors running in parallel:
\begin{itemize}
    \item The Self-play actor fetches the current networks of the model (representation, dynamics, value, policy prediction and reward networks: $\mathcal{H}, \mathcal{G}, \mathcal{V}, \mathcal{P}, \mathcal{R}$). It samples episodes according to these networks, following the algorithm \ref{alg:sample_episode}. Although the algorithm relies on episode sampling, the differences from the original version of EfficientZero, \cite{Ye2021} reside also in the episode post processing. After generating the trajectory, each episode is edited, associating each state $s_t$ with the executed action at time $t$ rather than the decided action at that time. This modification ensures that the replay buffer contains an episode that is effectively free of delays, allowing the utilization of existing learning procedures for effective learning.
    In addition, for the learner's sake, the self-play actors store statistics and outputs of the consecutive MCTS searches to the replay buffer. 
    \item The CPU rollout workers \citep{Ye2021} are responsible for preparing the batch contexts (selecting indexes of trajectories, and defining boundaries of valid episodes).
    \item  GPU batch workers \citep{Ye2021} effectively place batches on the GPUs and trigger the learner with an available batch signal.
    \item The Learner itself updates weights of the different networks using the several costs functions: the innovative similarity cost (for efficient dynamics learning), the reward cost, the policy cost, and the value cost.
\end{itemize}
}

\review{As previously highlighted, it is important to note that, aside from modifications to the Self-Play actor and the testing procedure, our approach does not necessitate changes to the architecture. It is adaptable and can be integrated into any model-based algorithm.}

\review{Technically, the challenges were to incorporate the parallel forward of the four environments for effectiveness; to plan for the initial action queue based on the initial observation as described in Section \ref{sec:experiments}; and to manipulate episodes according to the delay values in a complex structure of observations, actions, reward, and statistics of MCTS searches.}

\begin{algorithm}
\caption{\review{DEZ: acting in environments with stochastic delays. PQR stands for Pending Queue Resolver (see Fig. \ref{pending_action})}}
\label{alg:sample_episode}
\begin{algorithmic}
\State $n \gets 0$
\While{$n < $ STEPS}
\State $\mathcal{H}, \mathcal{G}, \mathcal{V}, \mathcal{P}, \mathcal{R} \gets \mathcal{H}_\theta, \mathcal{G}_\theta, \mathcal{V}_\theta, \mathcal{P}_\theta, \mathcal{R}_\theta$
\State Sample new episode $(s_0,z_0, a_0, \dots s_T,z_T, a_T)$
\State Initialize queues:
\State Default action queue: $[a_{-M}, \dots, a_{-1}] = \Bar{a}$. Delay queue: $[z_{-M}, \dots, z_{-1}] = [M, \dots, M]$.

\State $t \gets 0$
    \While{episode not terminated}
    \State Observe $s_t, z_t$
    \State Query from the delayed environment the estimated pending queue:
    \State $[\hat{a}_t, \dots, \hat{a}_{t+z_t-1}]= $ PQR$(a_{t-M}, \dots, a_{t-1}, z_{t-M}, \dots, z_{t-1})$
    \State $\hat{s}_{t+1} = g(s_t, \hat{a}_t)$
    \State $\vdots$
    \State $\hat{s}_{t+z_t} = g(\hat{s}_{t+z_t-1}, \hat{a}_{t+z_t-1})$
    \State $\pi_t = $ MCTS$(\hat{s}_{t+z_t})$
    \State $a_t \sim \pi_t$
    \State Shift the action and delay queues and insert $a_t$ and $z_t$.

    \State $t \gets t+1$
    \EndWhile
\State Post process the episode $(s_0,z_0, a_0, \dots s_T,z_T, a_T)$ and compute effective decision times $\tau_0,\dots,\tau_T$
\State Add $(s_0,\tau_0, a_0, \dots s_T,\tau_T, a_T)$ to the Replay Buffer.
\State $n \gets n+T$
\EndWhile
\end{algorithmic}
\end{algorithm}

\newpage

\section{Experiments}
\subsection{Convergence Plots for Constant Delay}
\label{sec: const convergence plots} 
 Figure \ref{fig:const_scores_plots} gives the learning curves of DEZ together with the baselines for constant delay.
\begin{figure}[H]
\centering

\begin{tikzpicture}
\centering
\matrix (a)[row sep=0mm, column sep=0mm, inner sep=1mm,  matrix of nodes] at (0,0) {
    \includegraphics[width=7cm]{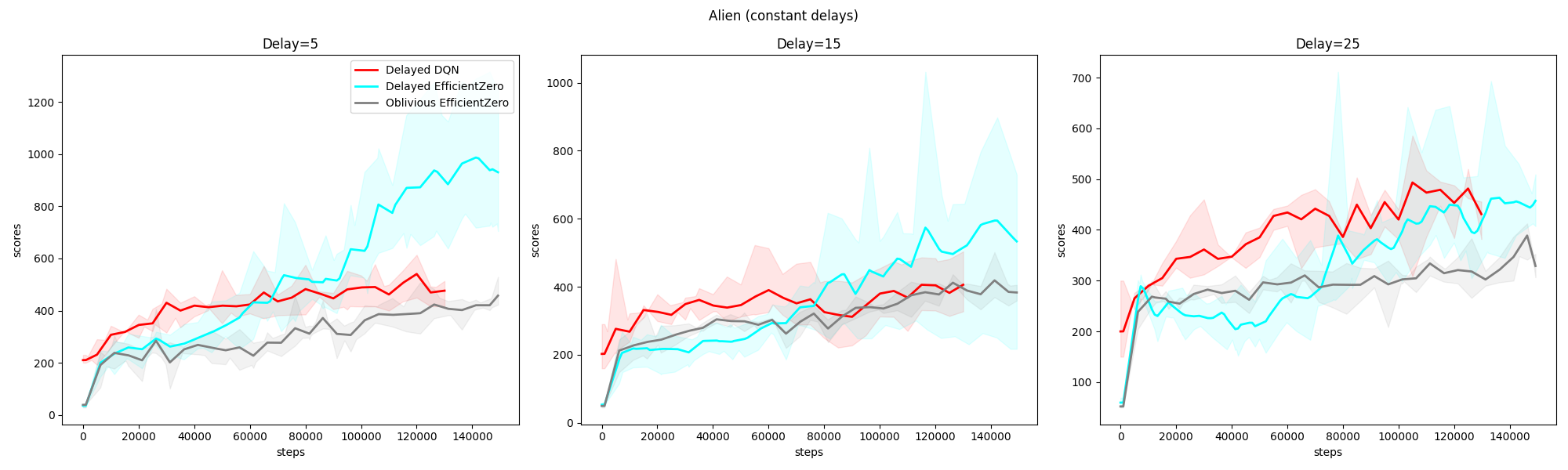} &
    \includegraphics[width=7cm]{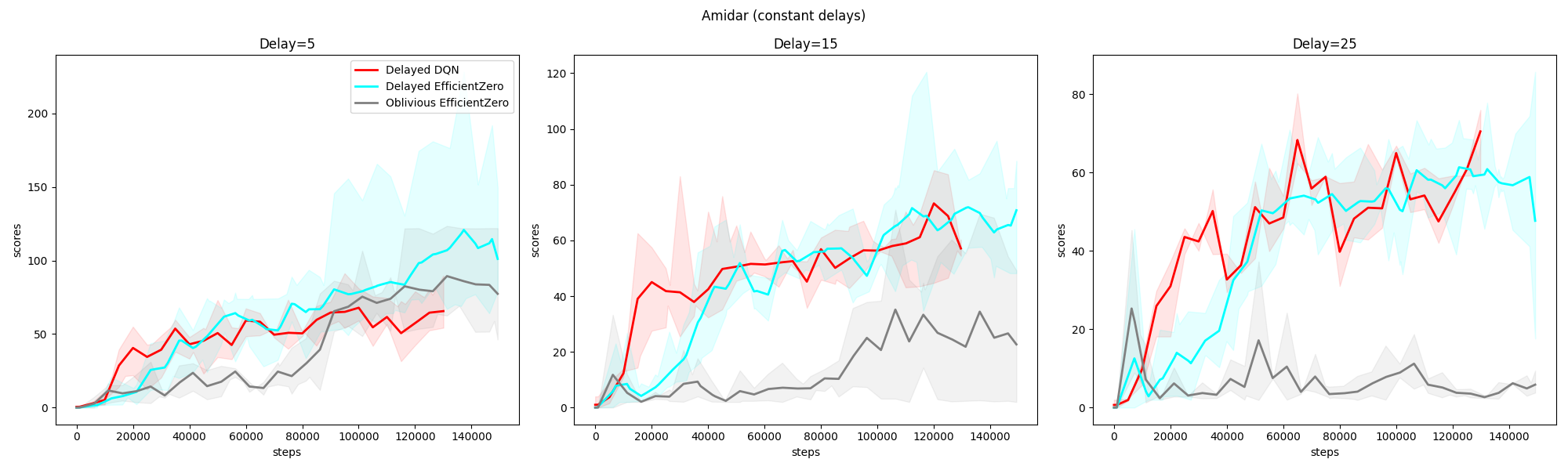}\\
    \includegraphics[width=7cm]{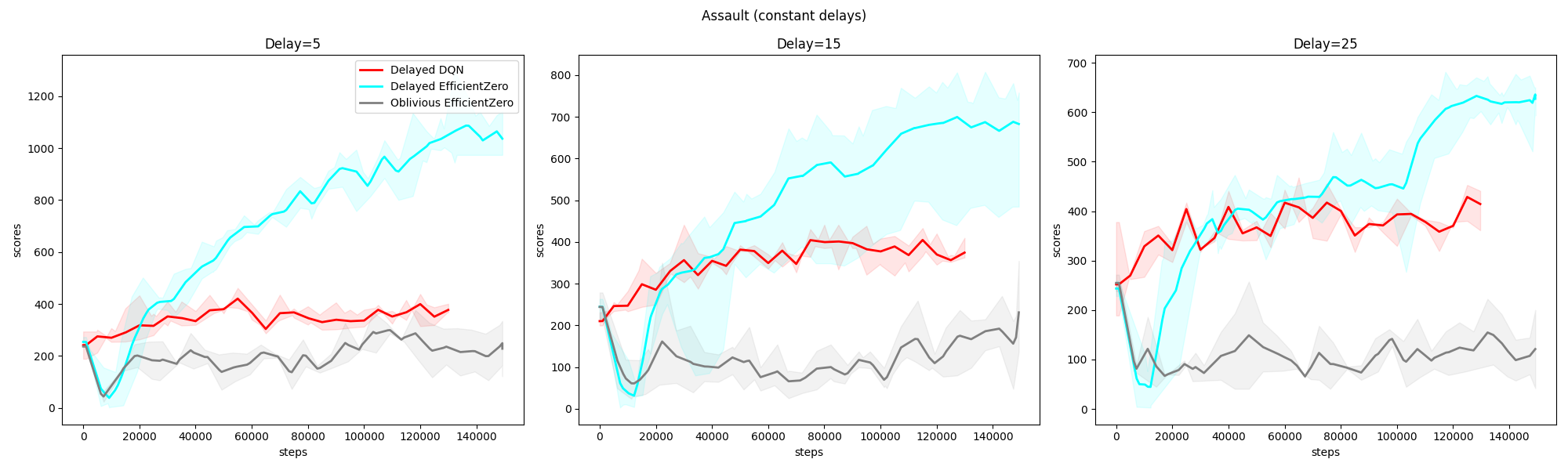} &
    \includegraphics[width=7cm]{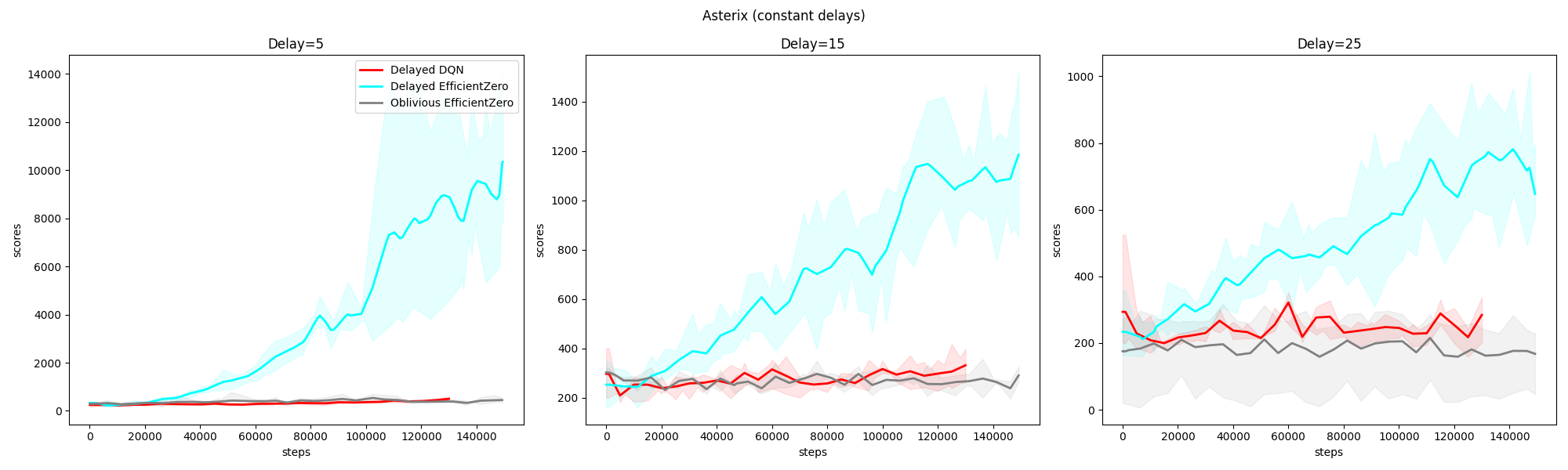}\\
    
    \includegraphics[width=7cm]{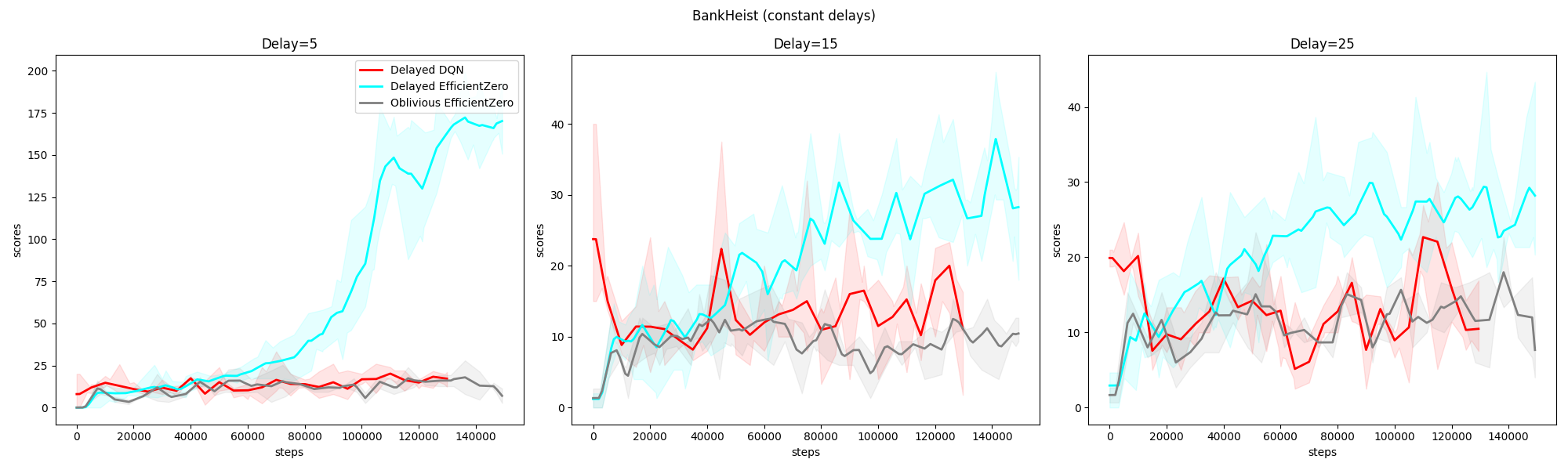} &
    \includegraphics[width=7cm]{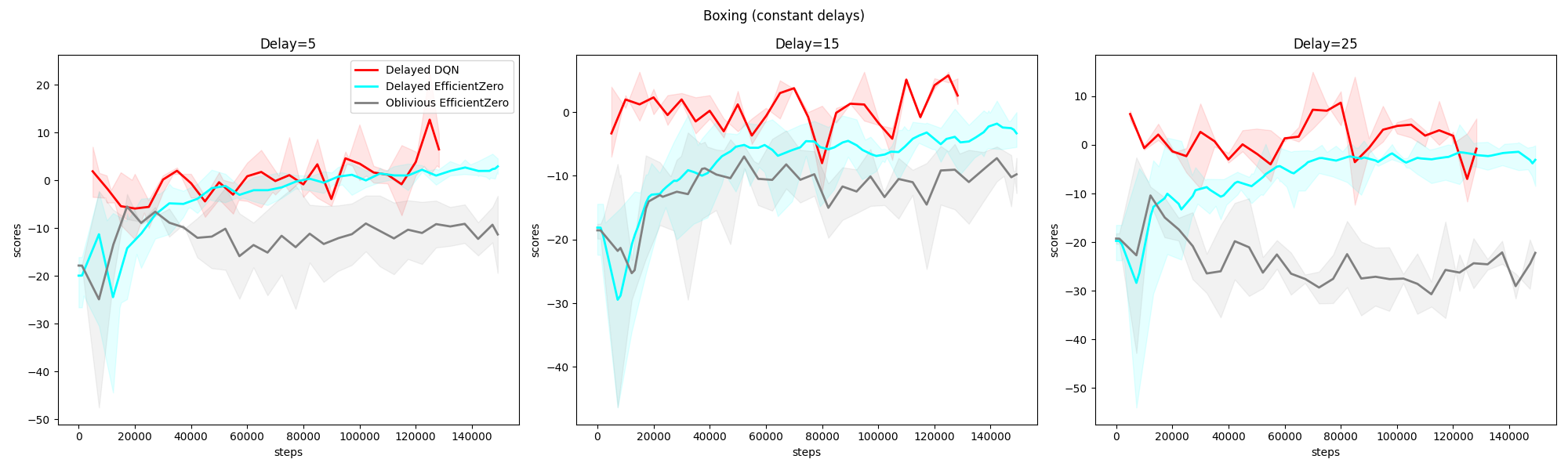}\\
    \includegraphics[width=7cm]{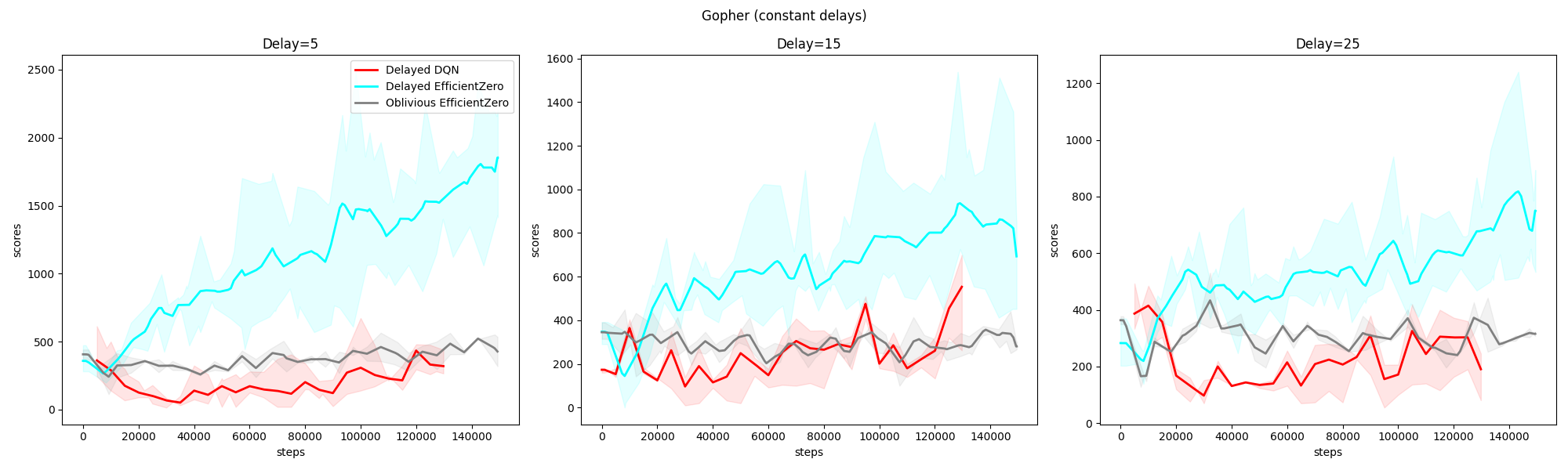} &
    \includegraphics[width=7cm]{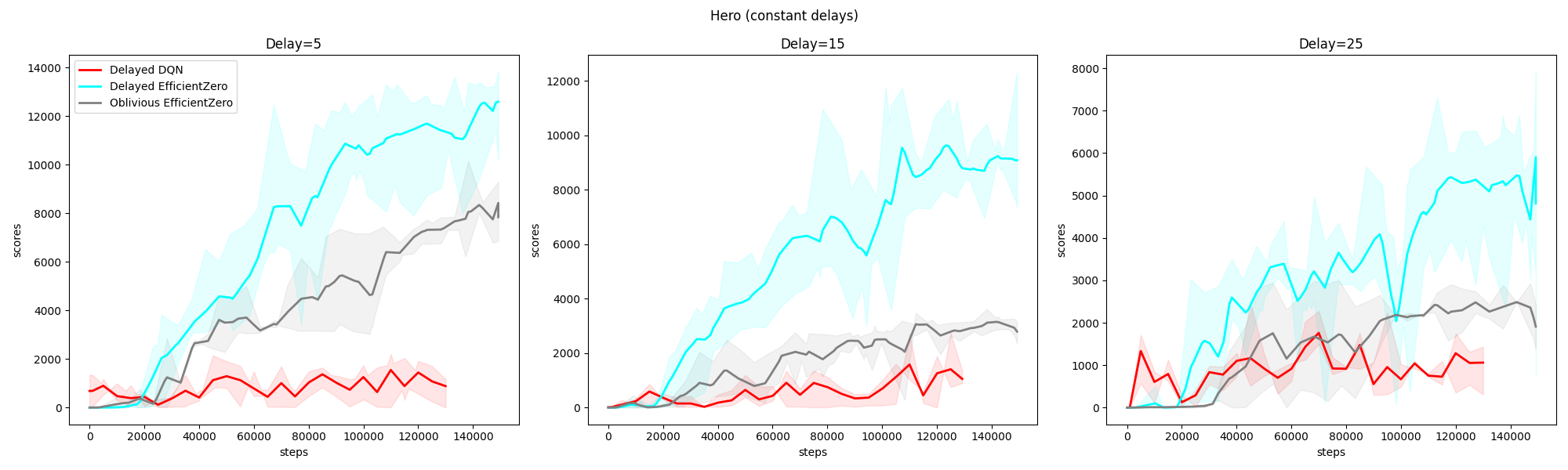}\\
    \includegraphics[width=7cm]{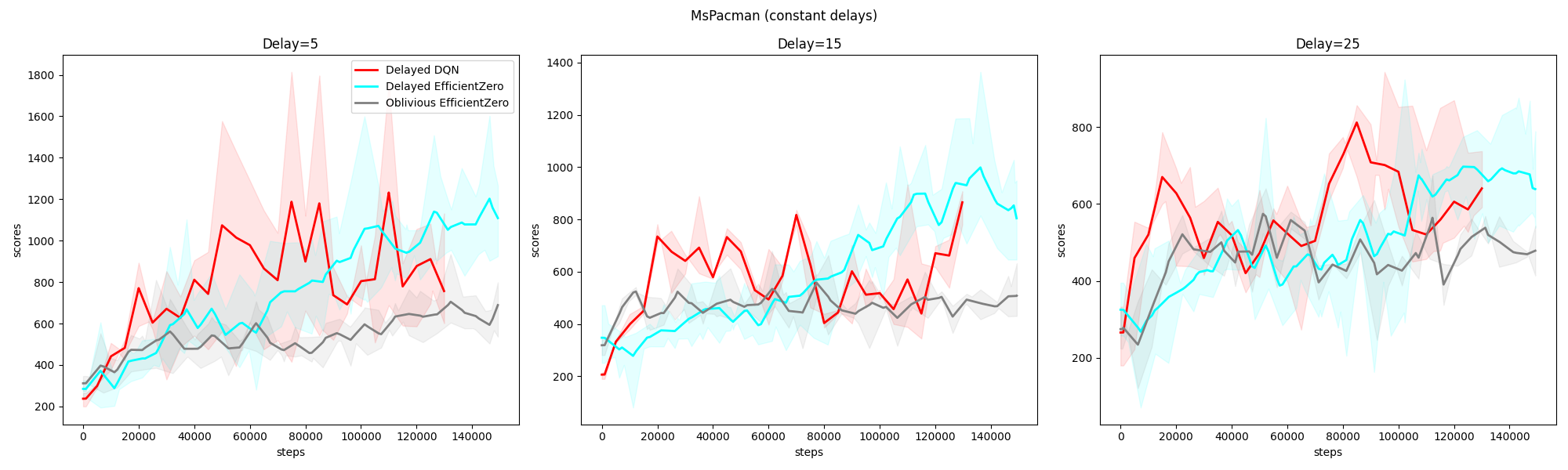} &
    \includegraphics[width=7cm]{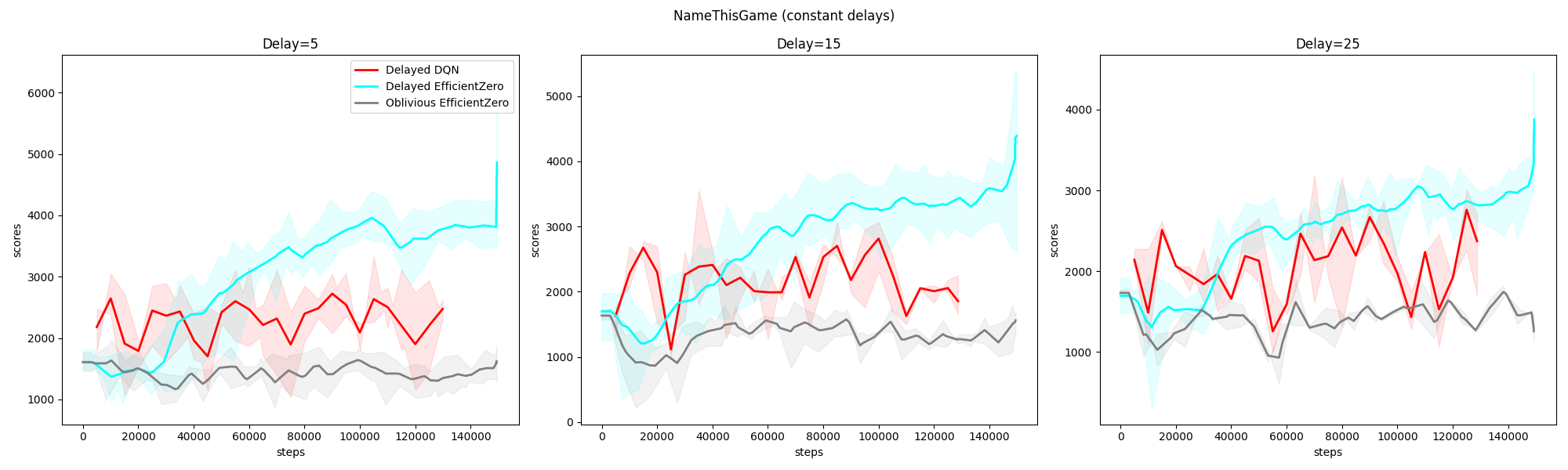}\\
    \includegraphics[width=7cm]{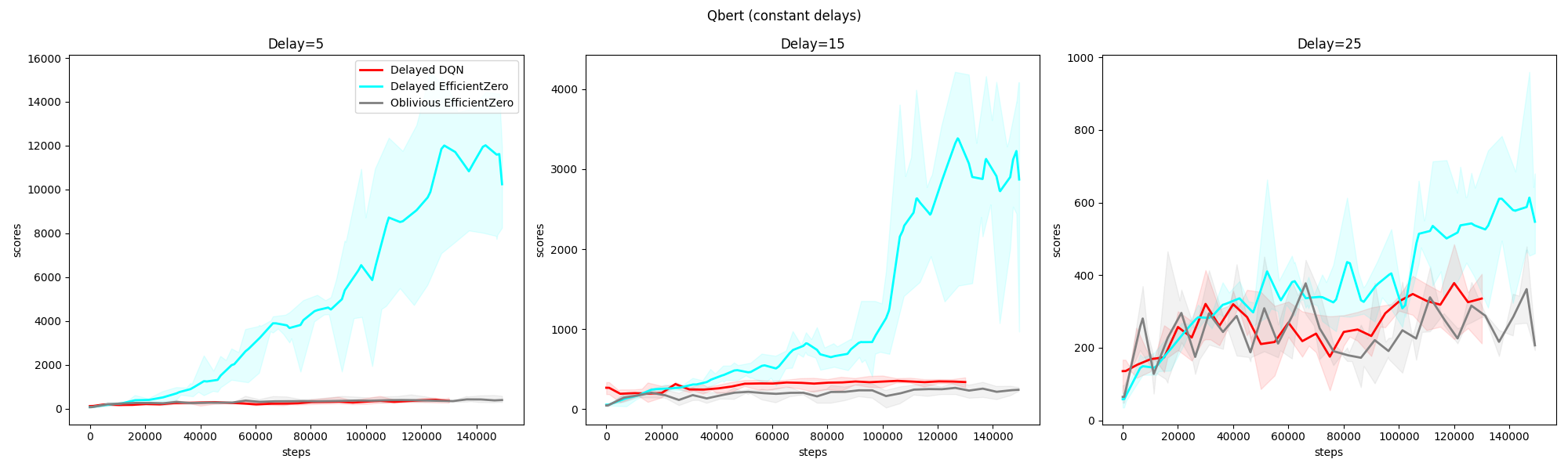} &
    \includegraphics[width=7cm]{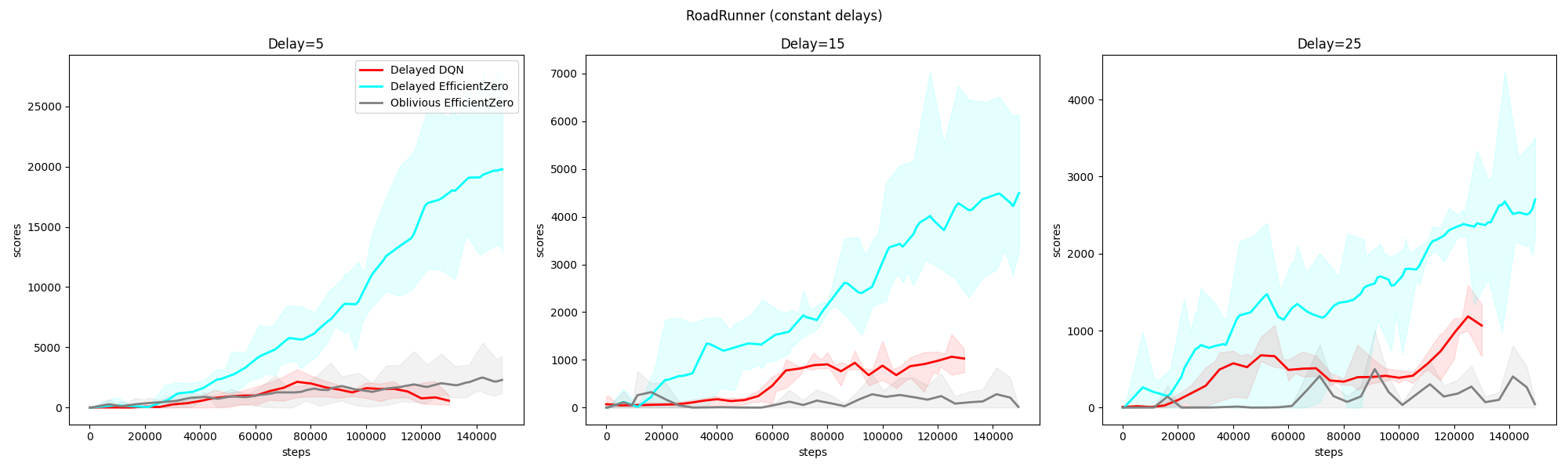}\\
    \includegraphics[width=7cm]{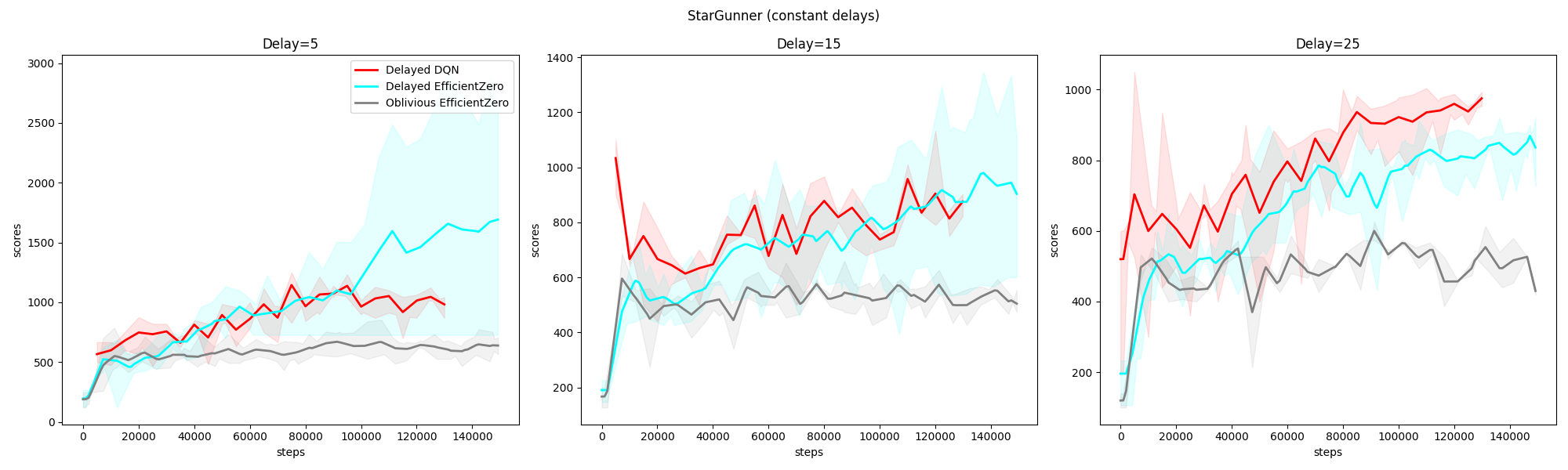} &
    \includegraphics[width=7cm]{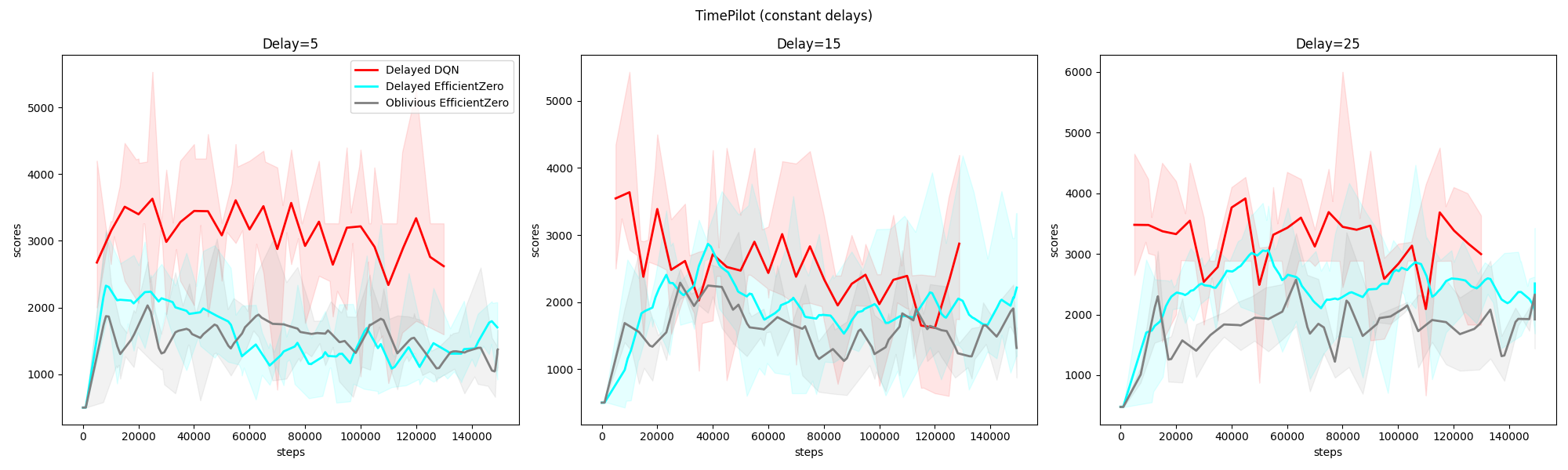}\\
    \includegraphics[width=7cm]{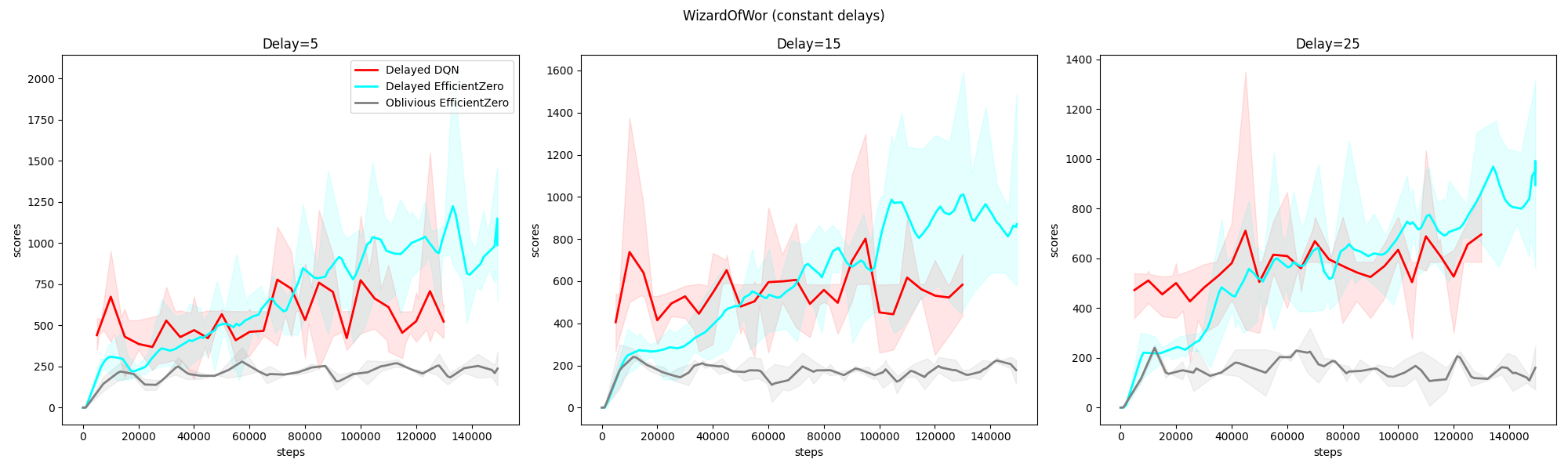} &
    \\
        };

\draw[dashed, black] (a-1-1.north east) -- (a-8-1.south east);

\end{tikzpicture}
\caption{Convergence plots for 15 Atari games on constant delays in $\{5,15,25\}$.}
\label{fig:const_scores_plots}
\end{figure}

\subsection{Convergence Plots for Stochastic Delay}
\label{sec: stoch convergence plots} 
 Figure \ref{fig:stoch_scores_plots} gives the learning curves of DEZ together with the baselines for stochastic delay.
\begin{figure}[H]
\centering

\begin{tikzpicture}
\centering
\matrix (a)[row sep=0mm, column sep=0mm, inner sep=1mm,  matrix of nodes] at (0,0) {
    \includegraphics[width=7cm]{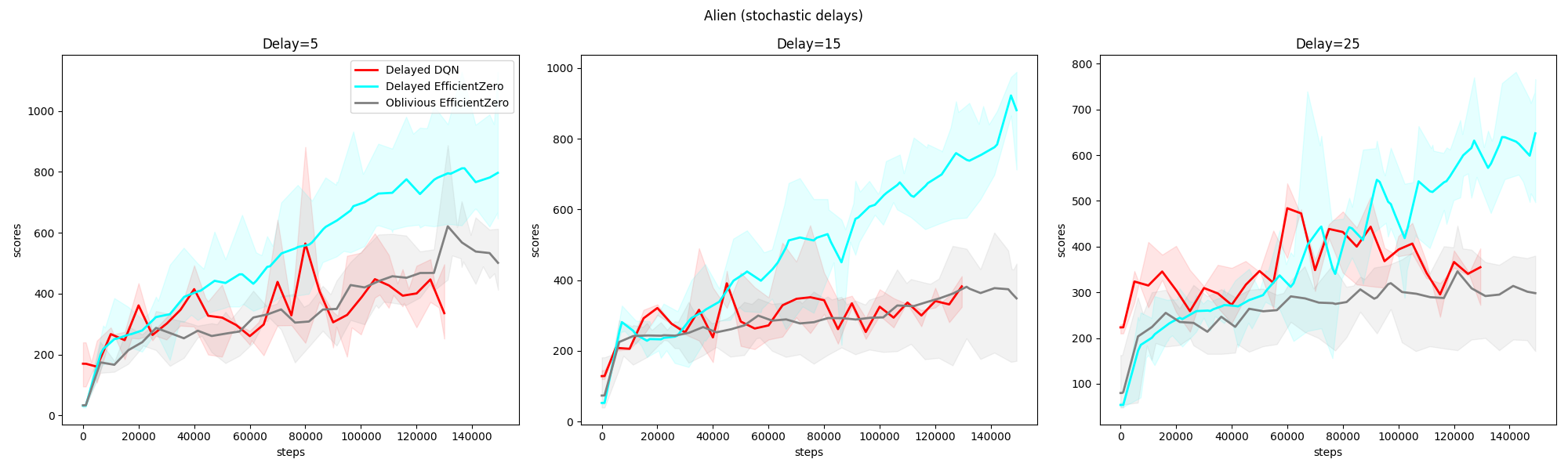} &
    \includegraphics[width=7cm]{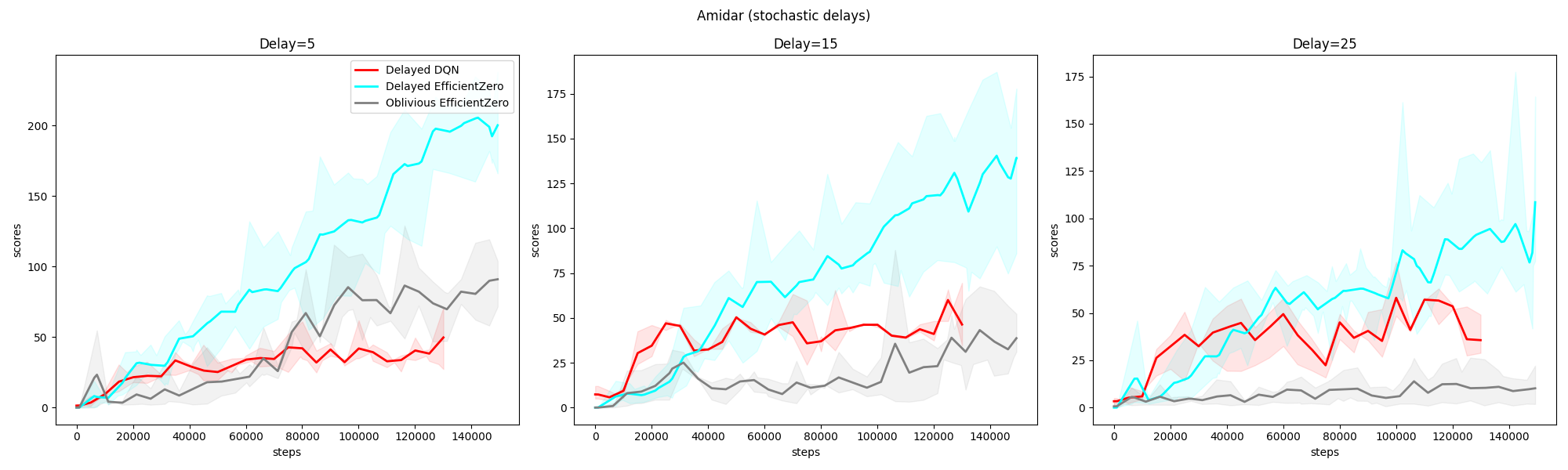}\\
    \includegraphics[width=7cm]{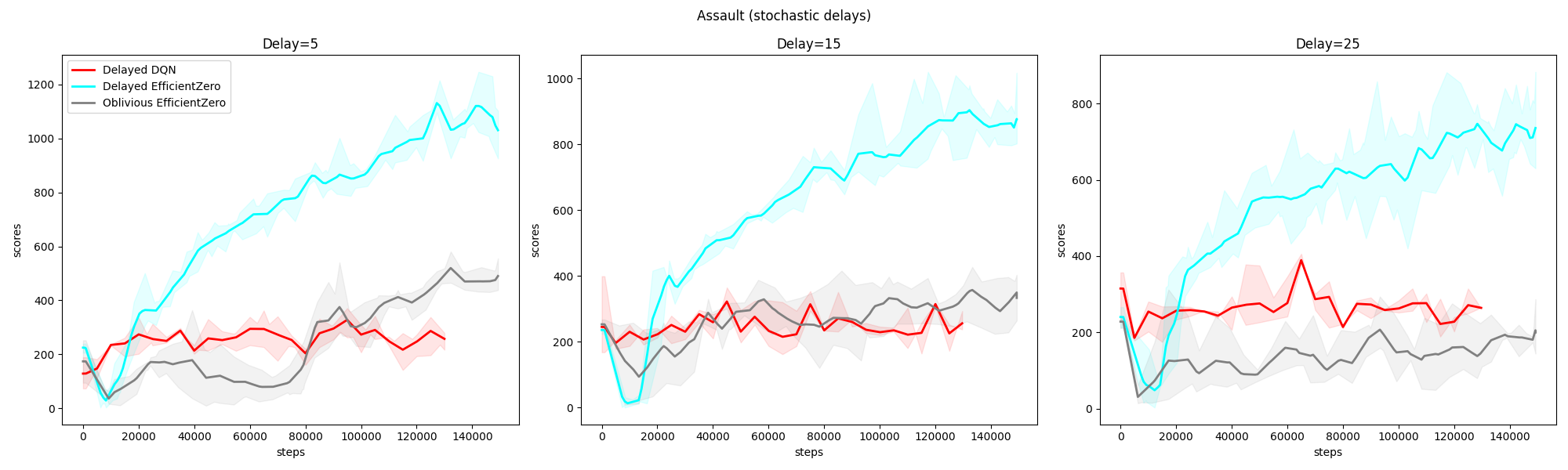} &
    \includegraphics[width=7cm]{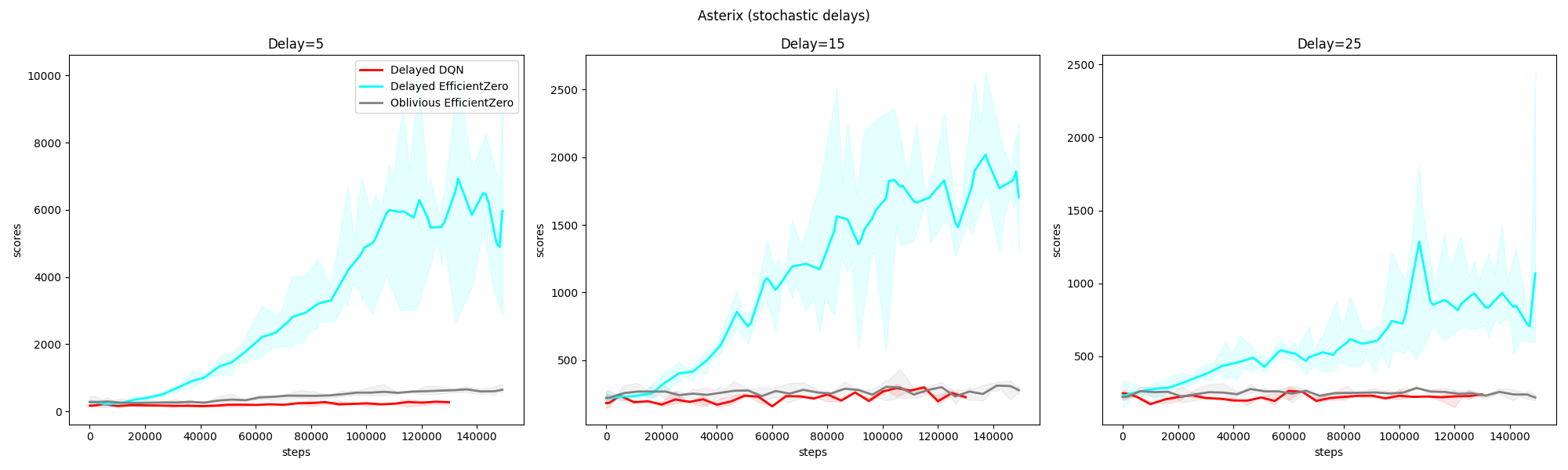}\\
    
    \includegraphics[width=7cm]{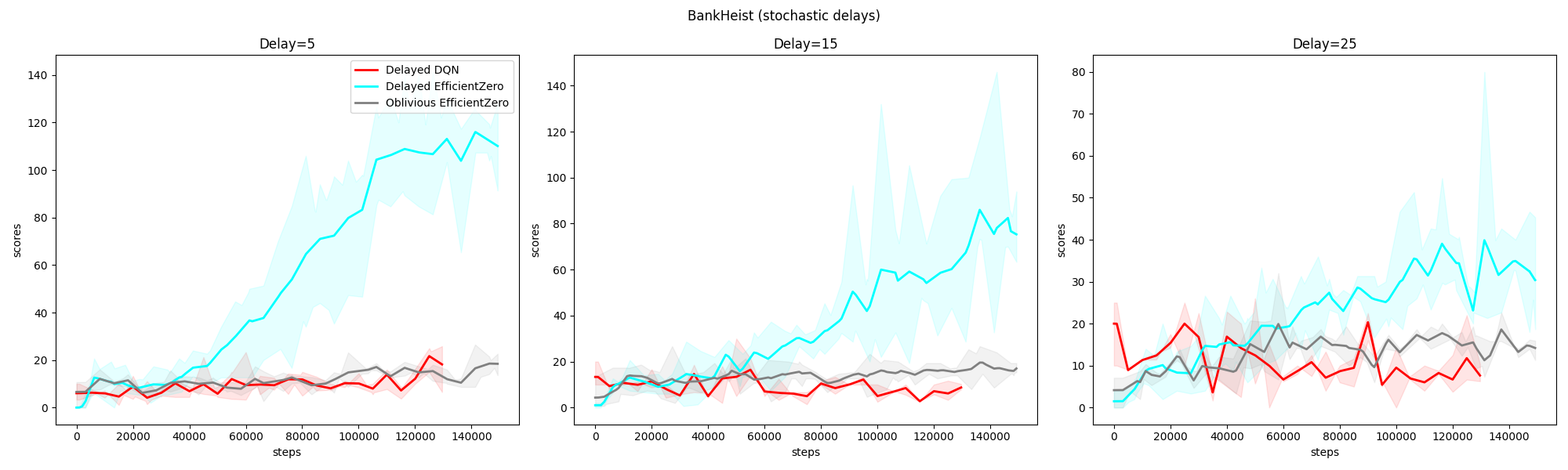} &
    \includegraphics[width=7cm]{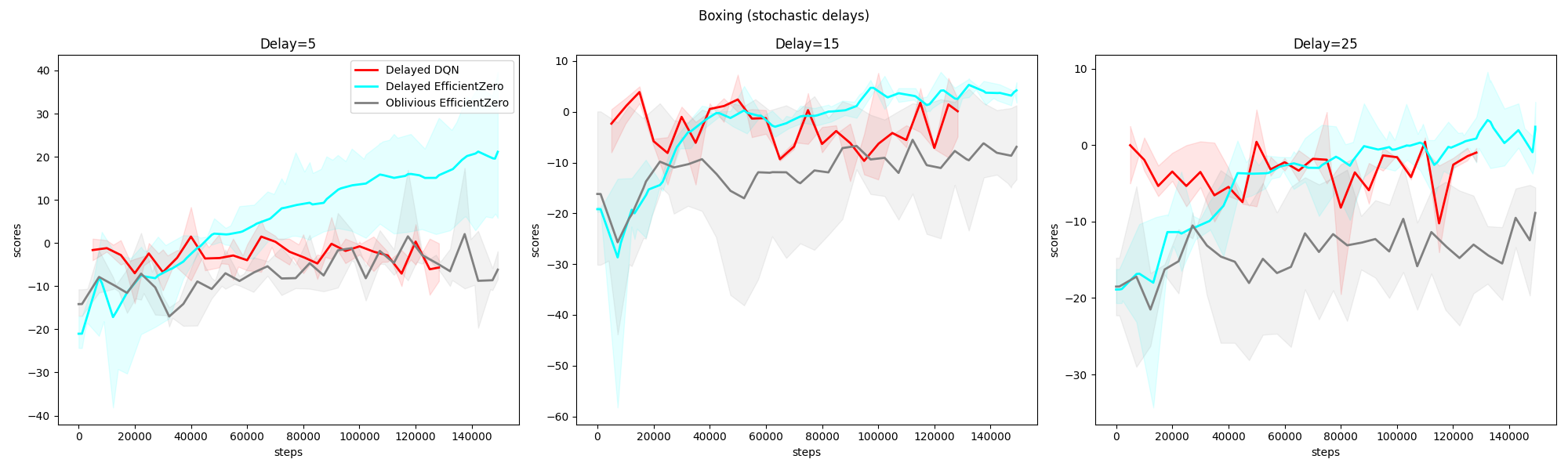}\\
    \includegraphics[width=7cm]{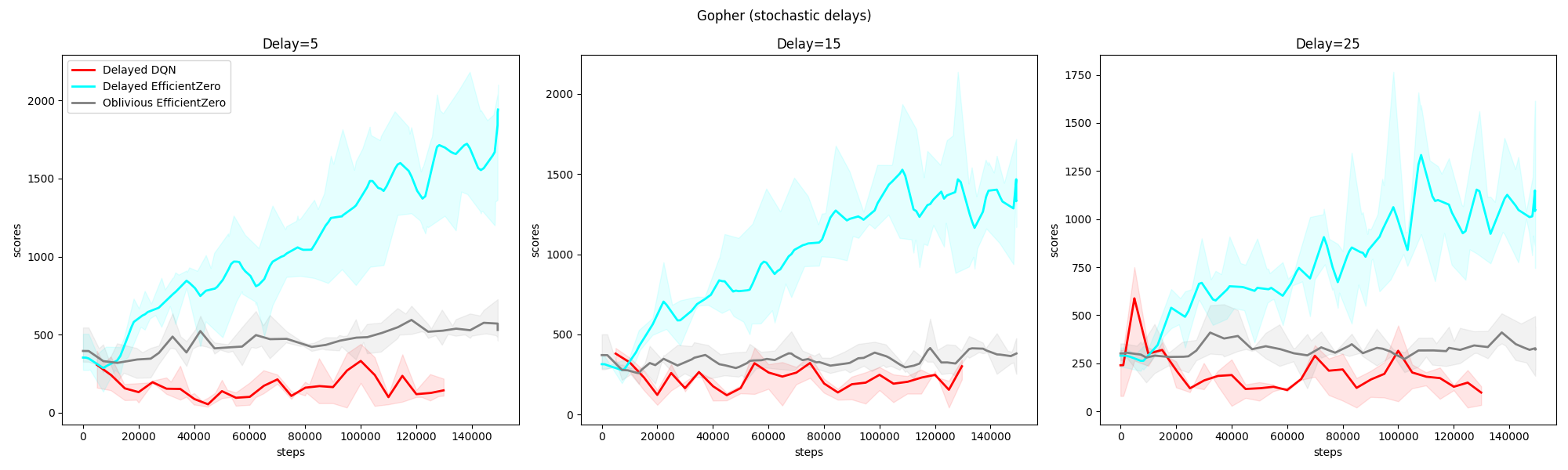} &
    \includegraphics[width=7cm]{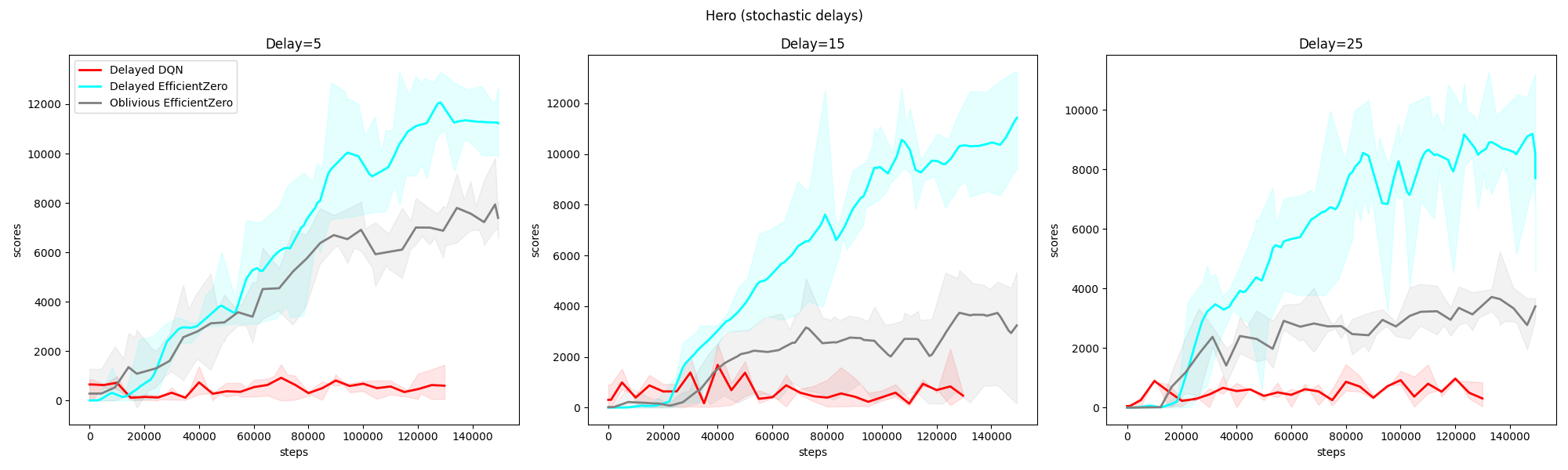}\\
    \includegraphics[width=7cm]{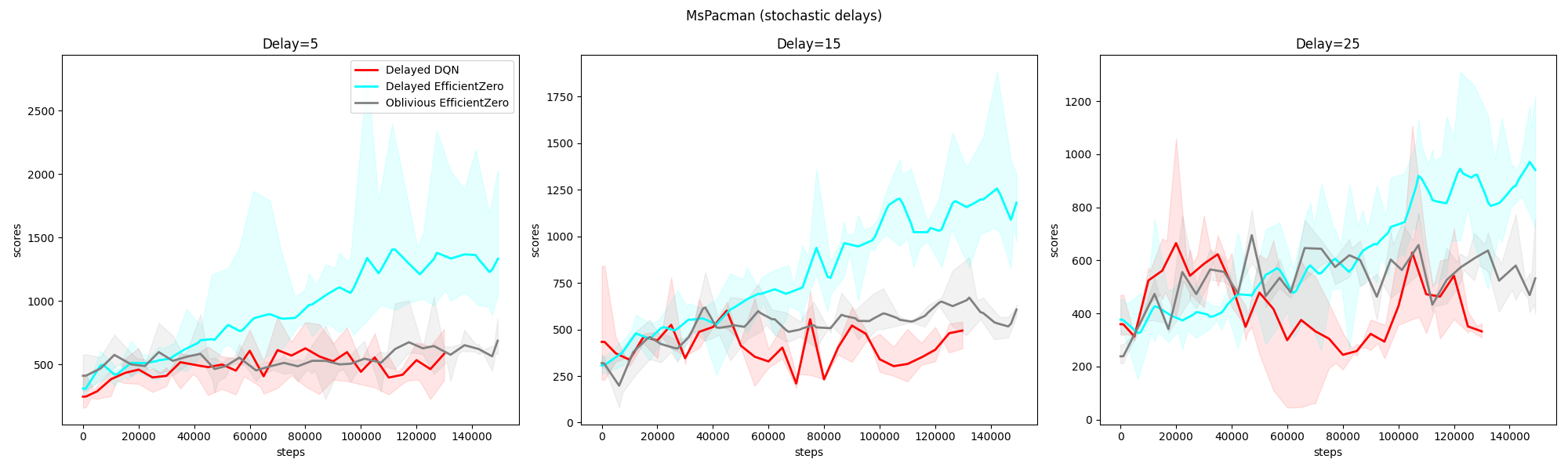} &
    \includegraphics[width=7cm]{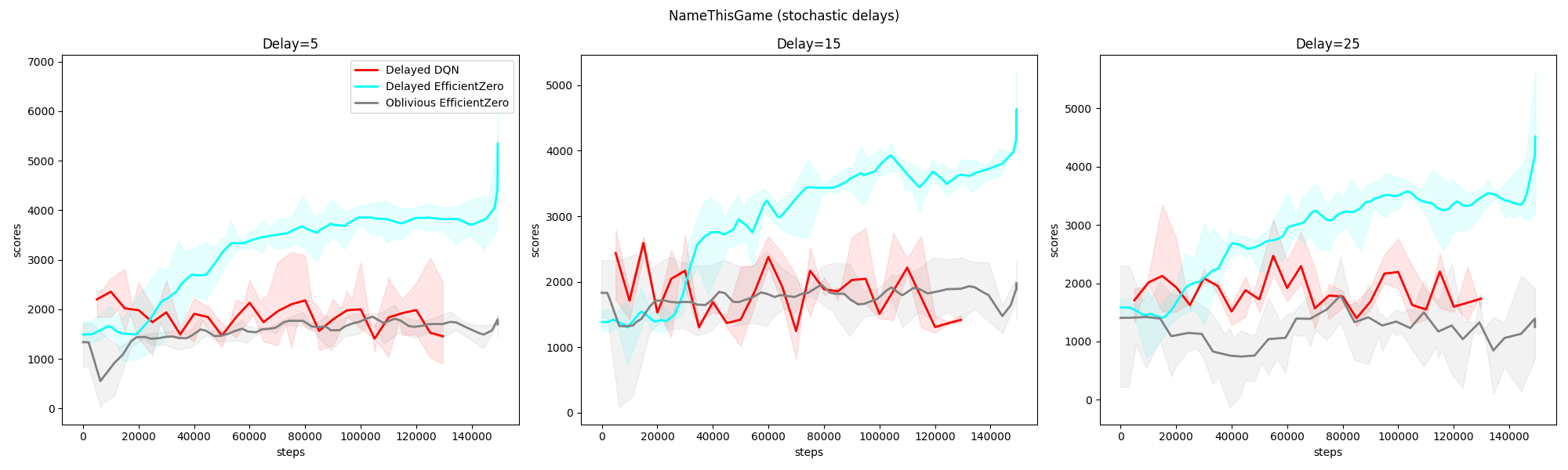}\\
    \includegraphics[width=7cm]{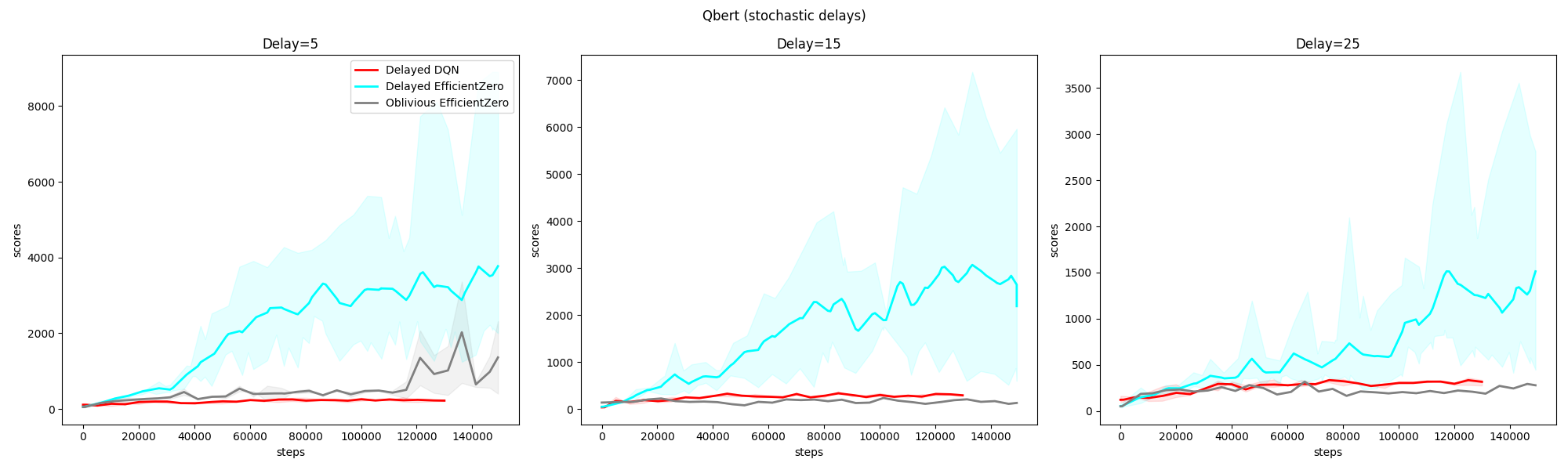} &
    \includegraphics[width=7cm]{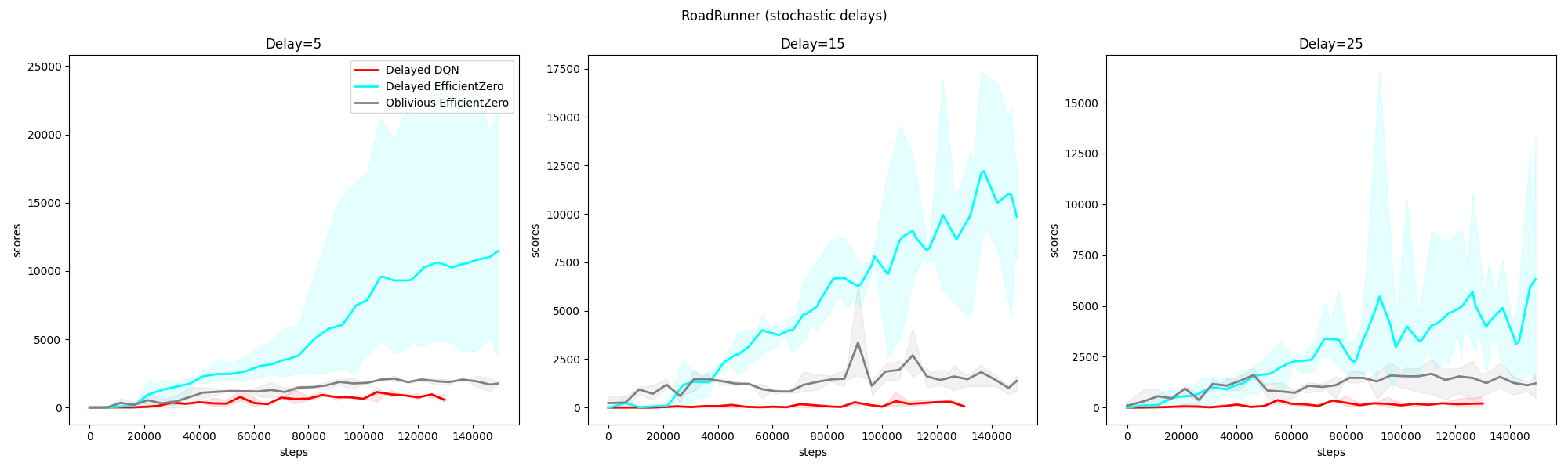}\\
    \includegraphics[width=7cm]{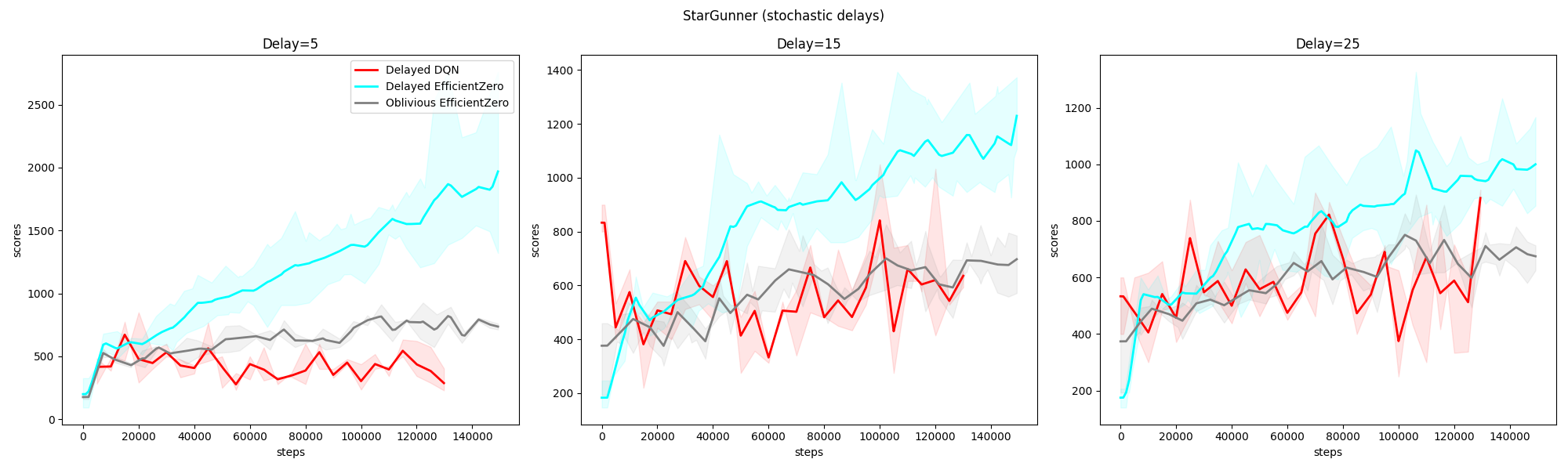} &
    \includegraphics[width=7cm]{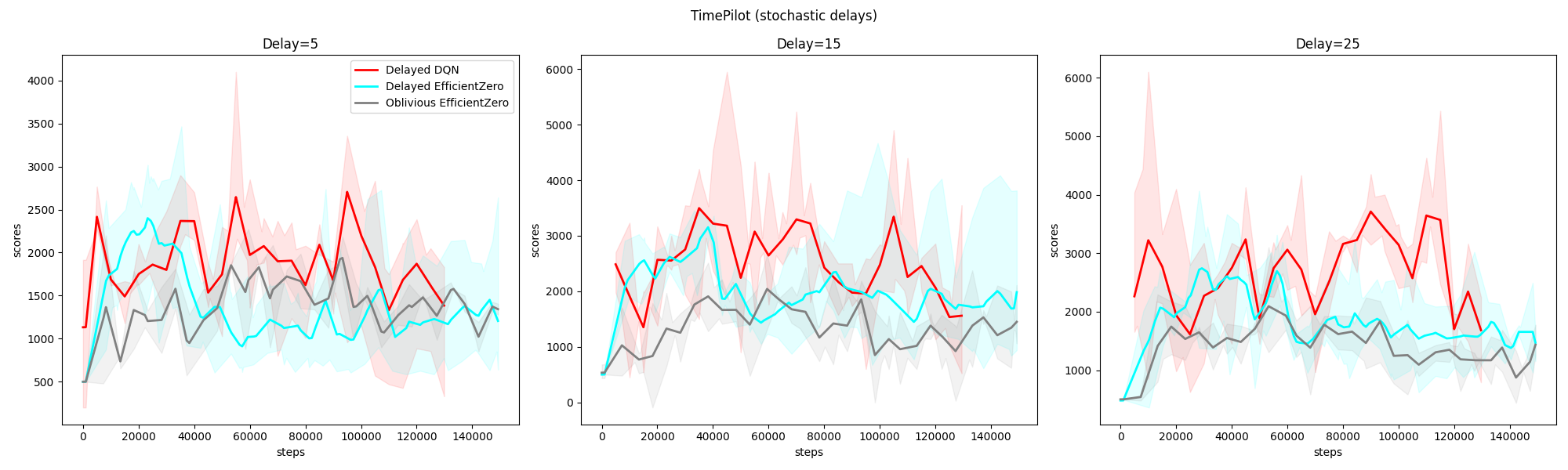}\\
    \includegraphics[width=7cm]{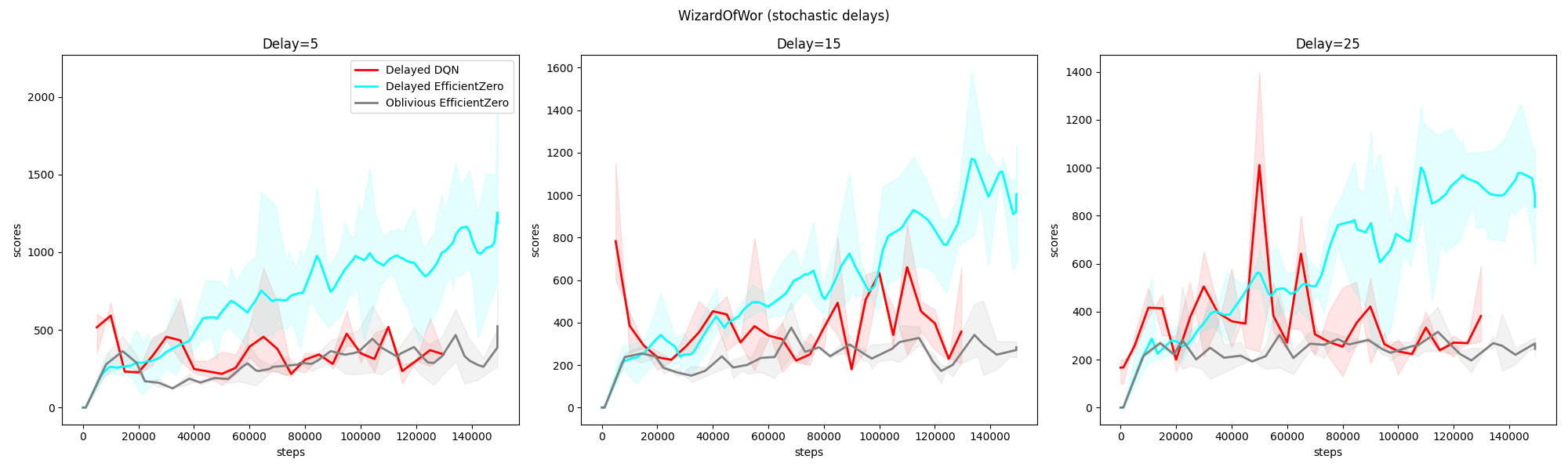} &
    \\
        };
\draw[dashed, black] (a-1-1.north east) -- (a-8-1.south east);

\end{tikzpicture}
\caption{Convergence plots for 15 Atari games on stochastic delays with maximal delay $M\in\{5,15,25\}$ and probability $p=0.2$.}
\label{fig:stoch_scores_plots}
\end{figure}

\subsection{DEZ on Large delays}
\label{sec: large delays}
\review{Here, we add results for very large delays in Asterix and Hero Atari games. As expected, the scores decrease as the delay increases, due to the complexity and error in planning.} 
\begin{figure}[H]
\centering
\includegraphics[scale=0.3]{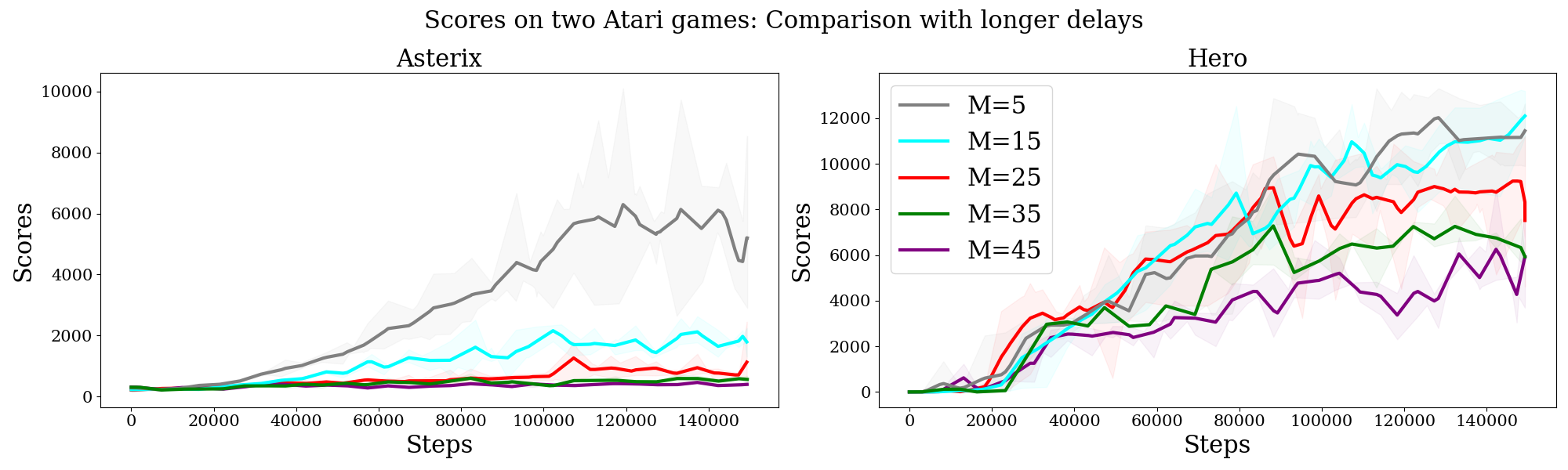}
\caption{Scores on two Atari games with stochastic delays up to $M=45$. Left:  Asterix. Right: Hero.}
\label{fig:long_delays_plots}
\end{figure}

\subsection{Summary of scores on Atari games}
\label{sec: atari table}

\review{We summarize the scores and standard deviations obtained for DEZ (ours) and baselines Oblivious EfficientZero of \citep{Ye2021} and Delayed DQN of \citep{Derman2021} on 15 Atari games. The following table shows scores on constant delays.}


\begin{table}[H]
\small
\makebox[\linewidth]{
  \begin{tabular}{|p{21mm}|p{16mm}|p{16mm}|p{16mm}|p{16mm}|p{16mm}|p{16mm}|p{16mm}|p{15mm}|p{16mm}|}
    \hline
    \multirow{2}{*}{Game} &
      \multicolumn{3}{c|}{$m=5$} &
      \multicolumn{3}{c|}{$m=15$} &
      \multicolumn{3}{c|}{$m=25$} \\
    & Obl. EZ & Delayed DQN & Delayed EZ & Obl. EZ & Delayed DQN & Delayed EZ & Obl. EZ & Delayed DQN & Delayed EZ\\
    \hline
    \Xhline{3\arrayrulewidth}
Alien &
335.5  $\pm$ 24.88 & 
484.33  $\pm$ 73.67 & 
\textbf{993.25}  $\pm$ 311.5 & 
287.86  $\pm$ 30.57 & 
421.33  $\pm$  58.0 & 
\textbf{512.0}  $\pm$ 109.75 & 
278.8  $\pm$  29.0 & 
454.0  $\pm$ 120.5 & 
\textbf{497.25}  $\pm$ 185.75 \\ 
 \hline
Amidar &
50.8  $\pm$   5.2 & 
64.0  $\pm$  20.0 & 
\textbf{111.8}  $\pm$  16.4 & 
12.43  $\pm$  2.86 & 
61.67  $\pm$  22.0 & 
\textbf{72.0}  $\pm$  18.2 & 
3.2  $\pm$   1.4 & 
59.5  $\pm$  15.5 & 
\textbf{68.0}  $\pm$  15.2 \\ 
 \hline
Assault &
234.5  $\pm$ 21.25 & 
369.0  $\pm$  91.0 & 
\textbf{1056.5}  $\pm$ 281.25 & 
208.6  $\pm$  33.2 & 
371.0  $\pm$ 93.67 & 
\textbf{674.75}  $\pm$ 59.75 & 
179.75  $\pm$  20.0 & 
389.5  $\pm$ 107.5 & 
\textbf{636.6}  $\pm$  92.4 \\ 
 \hline
Asterix &
348.43  $\pm$ 56.43 & 
486.25  $\pm$ 242.75 & 
\textbf{10758.2}  $\pm$ 5809.0 & 
244.4  $\pm$  25.4 & 
304.75  $\pm$ 160.0 & 
\textbf{1160.71}  $\pm$ 447.86 & 
186.4  $\pm$  15.4 & 
248.33  $\pm$ 136.33 & 
\textbf{673.4}  $\pm$ 314.8 \\ 
 \hline
BankHeist &
13.25  $\pm$  4.25 & 
17.0  $\pm$  13.5 & 
\textbf{180.2}  $\pm$  33.2 & 
11.4  $\pm$   2.2 & 
13.67  $\pm$ 12.67 & 
\textbf{35.6}  $\pm$  17.4 & 
12.5  $\pm$   3.0 & 
14.17  $\pm$ 10.83 & 
\textbf{29.4}  $\pm$  14.6 \\ 
 \hline
Boxing &
-4.5  $\pm$  1.25 & 
\textbf{2.33}  $\pm$  9.33 & 
2.0  $\pm$  4.75 & 
-5.2  $\pm$   1.8 & 
\textbf{1.5}  $\pm$   5.5 & 
-3.25  $\pm$  4.25 & 
-11.75  $\pm$  2.25 & 
\textbf{2.0}  $\pm$   8.0 & 
-1.25  $\pm$   5.0 \\ 
 \hline
Gopher &
373.5  $\pm$  43.0 & 
294.67  $\pm$ 190.33 & 
\textbf{1882.0}  $\pm$ 978.75 & 
306.6  $\pm$  44.4 & 
287.0  $\pm$ 196.5 & 
\textbf{932.0}  $\pm$ 365.0 & 
307.25  $\pm$  36.0 & 
239.0  $\pm$ 157.0 & 
\textbf{812.2}  $\pm$ 372.4 \\ 
 \hline
Hero &
4867.6  $\pm$ 497.4 & 
948.67  $\pm$ 1157.33 & 
\textbf{11233.2}  $\pm$ 1478.2 & 
1949.4  $\pm$ 156.0 & 
717.0  $\pm$ 1107.5 & 
\textbf{9630.4}  $\pm$ 1874.4 & 
1364.5  $\pm$ 273.5 & 
843.5  $\pm$ 1124.5 & 
\textbf{5908.8}  $\pm$ 1707.4 \\ 
 \hline
MsPacman &
520.0  $\pm$ 43.83 & 
849.6  $\pm$ 293.6 & 
\textbf{1091.4}  $\pm$ 338.8 & 
386.0  $\pm$  40.2 & 
665.2  $\pm$ 178.2 & 
\textbf{934.5}  $\pm$ 360.5 & 
367.5  $\pm$  35.0 & 
616.67  $\pm$ 91.67 & 
\textbf{716.2}  $\pm$ 377.6 \\ 
 \hline
NameThisGame &
1707.17  $\pm$ 151.67 & 
2171.0  $\pm$ 658.75 & 
\textbf{5021.4}  $\pm$ 1007.2 & 
1725.4  $\pm$ 149.0 & 
2099.67  $\pm$ 577.0 & 
\textbf{3926.17}  $\pm$ 697.17 & 
1863.0  $\pm$ 202.0 & 
2083.0  $\pm$ 568.25 & 
\textbf{3591.2}  $\pm$ 1033.4 \\ 
 \hline
Qbert &
323.29  $\pm$ 51.29 & 
358.5  $\pm$ 141.5 & 
\textbf{13085.5}  $\pm$ 2510.5 & 
214.2  $\pm$  27.4 & 
343.0  $\pm$ 52.67 & 
\textbf{3241.6}  $\pm$ 1009.6 & 
227.0  $\pm$  50.6 & 
320.33  $\pm$ 83.67 & 
\textbf{622.0}  $\pm$ 285.6 \\ 
 \hline
RoadRunner &
1540.29  $\pm$ 293.57 & 
816.0  $\pm$ 912.33 & 
\textbf{18485.25}  $\pm$ 4068.75 & 
89.8  $\pm$  62.0 & 
970.25  $\pm$ 535.5 & 
\textbf{4636.0}  $\pm$ 948.4 & 
106.0  $\pm$  71.5 & 
1066.5  $\pm$ 521.0 & 
\textbf{2642.2}  $\pm$ 846.2 \\ 
 \hline
StarGunner &
631.0  $\pm$ 64.14 & 
992.0  $\pm$ 232.33 & 
\textbf{1841.2}  $\pm$ 462.6 & 
596.71  $\pm$ 63.43 & 
883.5  $\pm$ 211.5 & 
\textbf{937.83}  $\pm$ 137.17 & 
557.17  $\pm$  61.5 & 
\textbf{944.25}  $\pm$ 135.0 & 
874.4  $\pm$ 175.2 \\ 
 \hline
TimePilot &
1876.71  $\pm$ 340.29 & 
\textbf{2809.75}  $\pm$ 1265.5 & 
2048.5  $\pm$ 1564.75 & 
2264.4  $\pm$ 432.6 & 
1981.25  $\pm$ 908.75 & 
\textbf{2584.0}  $\pm$ 1277.25 & 
2659.0  $\pm$ 323.5 & 
\textbf{2871.67}  $\pm$ 1090.0 & 
2615.4  $\pm$ 1595.0 \\ 
 \hline
WizardOfWor &
434.25  $\pm$  71.5 & 
545.25  $\pm$ 312.5 & 
\textbf{1150.8}  $\pm$ 664.6 & 
348.8  $\pm$  60.8 & 
525.67  $\pm$ 226.67 & 
\textbf{1066.0}  $\pm$ 569.8 & 
368.25  $\pm$ 60.75 & 
608.33  $\pm$ 304.67 & 
\textbf{845.4}  $\pm$ 472.4 \\ 
    \Xhline{3\arrayrulewidth}
  \end{tabular}
}
\caption{Summary of mean scores on 15 Atari games  with constant execution delay $M\in\{5,15,25\}$ on 32 episodes for each of the four trained seeds.}
\label{table: constant exp summary}
\end{table}

The following table shows scores on stochastic delays.

\begin{table}[H]
\small
\makebox[\linewidth]{
  \begin{tabular}{|p{21mm}|p{16mm}|p{16mm}|p{16mm}|p{16mm}|p{16mm}|p{16mm}|p{16mm}|p{15mm}|p{16mm}|}
    \hline
    \multirow{2}{*}{Game} &
      \multicolumn{3}{c|}{$m=5$} &
      \multicolumn{3}{c|}{$m=15$} &
      \multicolumn{3}{c|}{$m=25$} \\
    & Obl. EZ & Delayed DQN & Delayed EZ & Obl. EZ & Delayed DQN & Delayed EZ & Obl. EZ & Delayed DQN & Delayed EZ\\
    \hline
    \Xhline{3\arrayrulewidth}
Alien &
348.43  $\pm$ 52.14 & 
366.0  $\pm$ 231.67 & 
\textbf{796.6}  $\pm$ 205.4 & 
318.0  $\pm$ 29.71 & 
333.0  $\pm$ 118.5 & 
\textbf{820.33}  $\pm$ 325.67 & 
257.71  $\pm$ 20.86 & 
324.5  $\pm$ 100.5 & 
\textbf{607.8}  $\pm$ 288.8 \\ 
 \hline
Amidar &
41.5  $\pm$  6.17 & 
37.0  $\pm$ 21.67 & 
\textbf{220.25}  $\pm$  43.0 & 
23.4  $\pm$   5.8 & 
48.5  $\pm$  27.0 & 
\textbf{145.67}  $\pm$ 30.67 & 
7.33  $\pm$  2.83 & 
40.0  $\pm$  17.5 & 
\textbf{95.8}  $\pm$  32.4 \\ 
 \hline
Assault &
376.0  $\pm$ 20.14 & 
267.0  $\pm$ 140.67 & 
\textbf{1068.25}  $\pm$ 299.0 & 
293.0  $\pm$  23.0 & 
242.5  $\pm$ 110.5 & 
\textbf{1132.0}  $\pm$ 286.33 & 
203.4  $\pm$  30.0 & 
251.5  $\pm$  98.5 & 
\textbf{790.6}  $\pm$ 191.8 \\ 
 \hline
Asterix &
507.3  $\pm$  35.9 & 
268.33  $\pm$ 148.0 & 
\textbf{6535.5}  $\pm$ 4392.75 & 
256.67  $\pm$  33.5 & 
239.5  $\pm$ 133.0 & 
\textbf{1225.0}  $\pm$ 934.33 & 
233.17  $\pm$ 16.33 & 
234.5  $\pm$ 168.5 & 
\textbf{636.0}  $\pm$ 396.6 \\ 
 \hline
BankHeist &
15.0  $\pm$  3.33 & 
13.0  $\pm$ 12.67 & 
\textbf{140.0}  $\pm$  30.6 & 
16.17  $\pm$  2.33 & 
6.5  $\pm$   7.5 & 
\textbf{91.67}  $\pm$ 26.67 & 
14.0  $\pm$   2.5 & 
9.0  $\pm$  12.0 & 
\textbf{27.33}  $\pm$  16.0 \\ 
 \hline
Boxing &
-2.6  $\pm$   4.0 & 
-3.0  $\pm$   6.0 & 
\textbf{20.2}  $\pm$   9.0 & 
-4.43  $\pm$  1.71 & 
-4.0  $\pm$   9.0 & 
\textbf{2.5}  $\pm$   6.0 & 
-5.5  $\pm$  1.75 & 
-3.0  $\pm$   6.5 & 
\textbf{3.33}  $\pm$  6.33 \\ 
 \hline
Gopher &
424.38  $\pm$ 35.62 & 
145.0  $\pm$ 127.33 & 
\textbf{1855.4}  $\pm$ 828.4 & 
364.33  $\pm$  67.5 & 
198.0  $\pm$ 168.0 & 
\textbf{1633.0}  $\pm$ 709.0 & 
347.4  $\pm$  31.8 & 
130.0  $\pm$ 118.5 & 
\textbf{1378.25}  $\pm$ 676.25 \\ 
 \hline
Hero &
4813.88  $\pm$ 475.75 & 
492.0  $\pm$ 856.0 & 
\textbf{11798.8}  $\pm$ 1539.6 & 
2466.67  $\pm$ 345.0 & 
669.5  $\pm$ 1006.0 & 
\textbf{11890.33}  $\pm$ 1372.0 & 
1887.83  $\pm$ 303.17 & 
619.0  $\pm$ 1027.5 & 
\textbf{8127.25}  $\pm$ 2681.25 \\ 
 \hline
MsPacman &
530.0  $\pm$ 64.62 & 
444.0  $\pm$ 276.0 & 
\textbf{1443.6}  $\pm$ 498.4 & 
452.4  $\pm$  55.0 & 
439.0  $\pm$ 192.0 & 
\textbf{1173.0}  $\pm$ 612.33 & 
397.17  $\pm$ 66.17 & 
399.0  $\pm$ 162.5 & 
\textbf{940.6}  $\pm$ 455.8 \\ 
 \hline
NameThisGame &
1933.33  $\pm$ 127.33 & 
1673.0  $\pm$ 912.33 & 
\textbf{5395.0}  $\pm$ 1244.4 & 
1991.0  $\pm$ 160.0 & 
1760.0  $\pm$ 780.5 & 
\textbf{5029.33}  $\pm$ 1085.33 & 
1513.44  $\pm$ 243.0 & 
1761.0  $\pm$ 769.0 & 
\textbf{5027.4}  $\pm$ 1099.8 \\ 
 \hline
Qbert &
548.8  $\pm$ 207.0 & 
208.67  $\pm$ 113.33 & 
\textbf{3747.6}  $\pm$ 2603.8 & 
154.0  $\pm$  26.0 & 
296.5  $\pm$  88.5 & 
\textbf{4121.67}  $\pm$ 1829.0 & 
176.75  $\pm$  30.5 & 
303.0  $\pm$  78.5 & 
\textbf{1553.0}  $\pm$ 1275.5 \\ 
 \hline
RoadRunner &
1254.0  $\pm$ 161.33 & 
668.0  $\pm$ 649.5 & 
\textbf{9978.6}  $\pm$ 2914.6 & 
876.4  $\pm$ 166.8 & 
227.0  $\pm$ 515.5 & 
\textbf{12906.33}  $\pm$ 3311.0 & 
488.75  $\pm$ 113.62 & 
260.5  $\pm$ 352.0 & 
\textbf{2173.0}  $\pm$ 1103.0 \\ 
 \hline
StarGunner &
700.2  $\pm$  71.6 & 
378.5  $\pm$ 234.0 & 
\textbf{1931.8}  $\pm$ 607.4 & 
656.33  $\pm$  59.5 & 
604.0  $\pm$ 321.5 & 
\textbf{1235.33}  $\pm$ 249.67 & 
666.14  $\pm$ 59.86 & 
621.5  $\pm$ 356.0 & 
\textbf{939.33}  $\pm$ 199.33 \\ 
 \hline
TimePilot &
\textbf{2304.75}  $\pm$ 309.5 & 
1529.67  $\pm$ 972.67 & 
1425.6  $\pm$ 1347.8 & 
1997.17  $\pm$ 326.33 & 
1905.25  $\pm$ 1307.5 & 
\textbf{2128.0}  $\pm$ 1308.0 & 
2405.5  $\pm$ 379.83 & 
\textbf{2523.5}  $\pm$ 1488.0 & 
1931.33  $\pm$ 1778.0 \\ 
 \hline
WizardOfWor &
541.8  $\pm$ 125.8 & 
288.5  $\pm$ 247.0 & 
\textbf{1270.0}  $\pm$ 985.4 & 
383.8  $\pm$  53.2 & 
336.0  $\pm$ 304.0 & 
\textbf{1175.0}  $\pm$ 900.0 & 
409.5  $\pm$ 59.67 & 
294.5  $\pm$ 233.5 & 
\textbf{1038.67}  $\pm$ 721.67 \\
    \Xhline{3\arrayrulewidth}
\end{tabular}
}
\caption{Summary of mean scores on 15 Atari games  with stochastic execution delay with maximal delay $M\in\{5,15,25\}$ on 32 episodes for each of the four trained seeds.}
\label{table: stoch exp summary}
\end{table}

\section{Computational Ressources}
The computational costs associated with training DEZ in environments with delays increase in proportion to the delay values. This escalation arises from the multiple applications of the forward network during inference.

EfficientZero's architectural design harnesses the efficiency of C++/Cython for its Monte Carlo Tree Search (MCTS) implementation, intelligently distributes computation across CPU and GPU threads, thereby enabling parallel processing. Our experimental setup included two RTX 2080 TI GPUs. In the context of DEZ, each training run comprised 130,000 environment interactions and 150,000 training steps. We provide training duration statistics for the three delay configurations we employed:

For $M=5$, the training duration exhibited fluctuations over a period of 20 hours.
For $M=15$, the training duration exhibited fluctuations over a period of 22 hours.
For $M=25$, the training duration exhibited fluctuations over a period of 25 hours.

The training duration of Oblivious EfficientZero is lightly shorter due to the omission of multi-step forward processing. For any delay value we tested, the training duration exhibited fluctuations over a period of 20 hours.

\end{document}